\newcommand{\eq}[2]{#2}
\newcommand{\textbsf}[1]{{\bf \textsf{\small #1}}}
\definecolor{Gray}{gray}{0.9}
\NewDocumentCommand{\rot}{O{45} O{1em} m}{\makebox[#2][l]{\rotatebox{#1}{#3}}}%
\newcolumntype{g}{>{\columncolor{Gray}}c}
\newcommand{\R}{\ensuremath{\mathbb{R}}}
\newcommand\TODO[2]{}
\renewcommand\TODO[2]{\noindent\textbf{\textcolor{red}{\large TODO[#1]: #2}}}
\tikzstyle{bag} = [text width=4.5em, text centered]
\tikzstyle{end} = [circle, fill, inner sep=0pt, minimum size=3pt]
\icmltitlerunning{Learning to Search Better than Your Teacher}
\newtheorem{definition}{Definition}
\newcommand{\ignore}[1]{}
\newcommand{\lts}{\textsc{l2s}\xspace}
\newcommand{\myalg}{\textsc{LOLS}\xspace}
\newcommand{\searn}{\textsc{Searn}\xspace}
\newcommand{\aggrevate}{\textsc{AggreVaTe}\xspace}
\def\bx{{\boldsymbol{x}} }
\def\by{{\boldsymbol{y}}}
\def\bw{{\boldsymbol{w}}}
\def\bc{{\boldsymbol{c}}}
\def\piin{{\pi_i^{\text{in}}}}
\newcommand{\piout}[1][i]{{\pi_{#1}^{\text{out}}}}
\def\piref{{\pi^{\text{ref}}}}
\def\epsclass{{\delta_{\mbox{\small class}}}}
\def\Qref{{Q^{\piref}}}
\newcommand{\E}{\ensuremath{\mathbb{E}}}
\newcommand{\state}[2]{\ensuremath{d^{#1}_{#2}}}
\newcommand{\pil}[1]{\hat{\pi}_{#1}}
\newcommand{\best}[1]{{\bf #1}}
\newtheorem{theorem}{Theorem}
\newtheorem{lemma}{Lemma}
\newtheorem{corollary}{Corollary}
\newcommand{\pibar}{\ensuremath{\bar{\pi}}}
\newcommand{\trainset}{\ensuremath{\Gamma}}
\newcommand{\explset}{\ensuremath{\mathcal{I}}}
\newcommand{\explore}{\ensuremath{\texttt{explore}}}
\newcommand{\numexpl}[1]{\ensuremath{n_{#1}}}
\newcommand{\epsfull}[1][N]{\ensuremath{\delta_{#1}}}
\renewcommand{\P}{\ensuremath{\mathbb{P}}}
\newcommand{\regret}{\ensuremath\mathrm{Regret}}
\newcommand{\CS}{\ensuremath{\textsc{cs}}}
\renewcommand{\H}{\ensuremath{\mathcal{H}}}
\begin{document}
\eq{
\twocolumn[
	\icmltitle{Learning to Search Better than Your Teacher}

	\icmlauthor{Kai-Wei Chang}{kchang10@illinois.edu}
	\icmladdress{University of Illinois at Urbana Champaign, IL}
	\icmlauthor{Akshay Krishnamurthy}{akshaykr@cs.cmu.edu}
	\icmladdress{Carnegie Mellon University, Pittsburgh, PA}
	\icmlauthor{Alekh Agarwal}{alekha@microsoft.com}
	\icmladdress{Microsoft Research, New York, NY}
	\icmlauthor{Hal Daum\'e III}{hal@umiacs.umd.edu}
	\icmladdress{University of Maryland, College Park, MD}
	\icmlauthor{John Langford}{jcl@microsoft.com}
	\icmladdress{Microsoft Research, New York, NY}

	\icmlkeywords{}

	\vskip 0.3in
]
}{
\onecolumn
\icmltitle{Learning to Search Better than Your Teacher}

	\icmlauthor{Kai-Wei Chang}{kchang10@illinois.edu}
	\icmladdress{University of Illinois at Urbana Champaign, IL}
	\icmlauthor{Akshay Krishnamurthy}{akshaykr@cs.cmu.edu}
	\icmladdress{Carnegie Mellon University, Pittsburgh, PA}
	\icmlauthor{Alekh Agarwal}{alekha@microsoft.com}
	\icmladdress{Microsoft Research, New York, NY}
	\icmlauthor{Hal Daum\'e III}{hal@umiacs.umd.edu}
	\icmladdress{University of Maryland, College Park, MD, USA}
	\icmlauthor{John Langford}{jcl@microsoft.com}
	\icmladdress{Microsoft Research, New York, NY}

	\icmlkeywords{}

	\vskip 0.3in

}

\begin{abstract}
  Methods for learning to search for structured prediction typically
  imitate a reference policy, with existing theoretical guarantees
  demonstrating low regret compared to that reference. This is
  unsatisfactory in many applications where the reference policy is
  suboptimal and the goal of learning is to improve upon it. Can
  learning to search work even when the reference is poor?

  We provide a new learning to search algorithm, \myalg, which
  does well relative to the reference policy, but \emph{additionally}
  guarantees low regret compared to \emph{deviations} from the learned
  policy: a local-optimality guarantee.  Consequently, \myalg can
  improve upon the reference policy, unlike previous algorithms. This
  enables us to develop \emph{structured contextual bandits}, a
  partial information structured prediction setting with many
  potential applications.
\end{abstract} 

\frenchspacing  
\section{Introduction}
\label{sec:intro}


In structured prediction problems, a learner makes joint predictions
over a set of interdependent output variables and observes a joint
loss.  For example, in a parsing task, the output is a parse tree over
a sentence.  Achieving optimal performance commonly requires the
prediction of each output variable to depend on neighboring variables.
One approach to structured prediction is \emph{learning to search}
(\lts)
\cite{collins04incremental,daume05laso,daume09searn,ross11dagger,doppa14hcsearch,ross14aggrevate},
which solves the problem by:

\begin{enumerate}[itemsep=0.1em,topsep=0em,parsep=0.1em]
\item converting structured prediction into a search problem with
  specified search space and actions;
\item defining structured features over each state to
capture the interdependency between output variables; 
\item constructing a reference policy based on training data;
\item learning a policy that \emph{imitates} the reference policy.
\end{enumerate}

Empirically, \lts approaches have been shown to be competitive with
other structured prediction approaches both in accuracy and running
time 
(see e.g.~\citet{daume14imperativesearn}). Theoretically, existing \lts
algorithms guarantee that if the learning step performs well, then the
learned policy is almost as good as the reference policy, implicitly
assuming that the reference policy attains good performance. Good
reference policies are typically derived using labels in the training
data, such as assigning each word to its correct POS tag. However,
when the reference policy is suboptimal, which can arise for reasons
such as computational constraints,
nothing can be said for
existing approaches.

This problem is most obviously manifest in a ``structured contextual
bandit''\footnote{The key difference from (1) contextual bandits is
  that the action space is exponentially large (in the length of
  trajectories in the search space); and from (2) reinforcement
  learning is that a baseline reference policy exists before learning
  starts.} setting. For example, one might want to predict how the
landing page of a high profile website should be displayed; this
involves many interdependent predictions: items to show, position and
size of those items, font, color, layout, etc.  It may be plausible to
derive a quality signal for the displayed page based on user feedback,
and we may have access to a reasonable reference policy (namely the
existing rule-based system that renders the current web page). But,
applying \lts techniques results in nonsense---learning something
almost as good as the existing policy is useless as we can just keep
using the current system and obtain that guarantee. Unlike the full
feedback settings, label information is not even available during
learning to define a substantially better reference. The goal of
learning here is to improve upon the current system, which is most
likely far from optimal. This naturally leads to the question:
\emph{is learning to search useless when the reference policy is
  poor?}

This is the core question of the paper, which we address first with a
new \lts algorithm, \myalg (Locally Optimal Learning to Search) in
Section~\ref{sec:search}.  \myalg operates in an online fashion and
achieves a bound on a convex combination of regret-to-reference and
regret-to-own-one-step-deviations.  The first part ensures that good
reference policies can be leveraged effectively; the second part
ensures that even if the reference policy is very sub-optimal, the
learned policy is approximately ``locally optimal'' in a sense made
formal in Section~\ref{sec:analysis}.

\myalg operates according to a general schematic that encompases many
past \lts algorithms (see Section~\ref{sec:search}), including
Searn~\cite{daume09searn}, DAgger \cite{ross11dagger} and AggreVaTe
\cite{ross14aggrevate}. A secondary contribution of this paper is a
theoretical analysis of both good and bad ways of instantiating this
schematic under a variety of conditions, including: whether the
reference policy is optimal or not, and whether the reference policy
is in the hypothesis class or not. We find that, while past algorithms
achieve good regret guarantees \emph{when the reference policy is
  optimal}, they can fail rather dramatically when it is not.
%
%
\myalg, on the other hand, has superior performance to other \lts algorithms
when the reference policy
performs poorly but local hill-climbing in policy space is effective.
In Section~\ref{sec:exp}, we empirically confirm that
\myalg can significantly outperform the reference policy in
practice on real-world datasets.

In Section~\ref{sec:structbandit} we extend \myalg to address the
structured contextual bandit setting, giving a natural modification to
the algorithm as well as the corresponding regret analysis.

\tikzstyle{level 1}=[level distance=1.5cm, sibling distance=1cm]
\tikzstyle{level 2}=[level distance=1.5cm, sibling distance=0.75cm]
	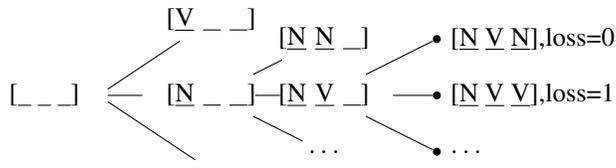
\begin{figure}[t]
		\begin{center}
		\begin{tikzpicture}[grow=right, sloped]
			\node[label=left:{[\underline{\ \ }\ \underline{\ \ }\ \underline{\ \ }]}]{}
			child {
				node[bag]{$\ldots$}
			}
			child {
			node[bag]{[\underline{N}\ \underline{\ \ }\ \underline{\ \ }]}
				child {
					node[bag]{$\ldots$}
				}
				child {
					node[bag]{[\underline{N}\ \underline{V}\ \underline{\ \ }]}
					child{
						node[end,label=right:{$\ldots$}]{}
					}
					child{
					node[end,label=right:{[\underline{N}\ \underline{V}\ \underline{V}],loss=1}]{}
					}
					child{
					node[end,label=right:{[\underline{N}\ \underline{V}\ \underline{N}],loss=0}]{}
					}
				}
				child {
					node[bag]{[\underline{N}\ \underline{N}\ \underline{\ \ }]}
				}
			}
			child {
				node[bag]{[\underline{V}\ \underline{\ \ }\ \underline{\ \ }]}
			};
		\end{tikzpicture}
		\end{center}
		\vspace{-2em}
			\caption{An illustration of the search space of a sequential 
			tagging example that assigns a part-of-speech tag sequence to the 
			sentence ``John saw Mary.''
				Each state represents a partial labeling.
			The start state $b= [\underline{\ \ }\ \underline{\ \ }\ \underline{\ \ }]$ and the set of end states $E = \{
			[\underline{N}\ \underline{V}\ \underline{N}],
			[\underline{N}\ \underline{V}\ \underline{V}],\ldots\}$. Each end state is associated with a loss. A policy chooses an action at each state in the search space to specify the next state.
		}
		\label{fig:pos}
	\end{figure}

The
algorithm \myalg, the new kind of regret guarantee it satisfies, the
modifications for the structured contextual bandit setting, and all
experiments are new here.

\section{Learning to Search}
\label{sec:search}

A structured prediction problem consists of an \emph{input space}
$\mathcal{X}$, an \emph{output space} $\mathcal{Y}$, a fixed but
unknown distribution $\mathcal{D}$ over $\mathcal{X} \times
\mathcal{Y}$, and a non-negative \emph{loss function}
$\ell(\by^*,\hat{\by}) \rightarrow \mathbb{R}^{\geq0}$ which measures
the distance between the true ($\by^*$) and predicted ($\hat{\by}$)
outputs.  The goal of structured learning is to use $N$ samples $(\bx_i, \by_i)_{i=1}^N$
to learn a mapping
$f~:~\mathcal{X} \rightarrow \mathcal{Y}$ that minimizes the expected
structured loss under $\mathcal{D}$.

In the learning to search framework, an input $\bx \in \mathcal{X}$
induces a search space, consisting of an initial state $b$ (which we
will take to also encode $\bx$), a set of end states and a transition
function that takes state/action pairs $s,a$ and deterministically
transitions to a new state $s'$.  For each end state $e$, there is a
corresponding structured output $\by_e$ and for convenience we define
the loss $\ell(e) = \ell(\by^*,\by_e)$ where $\by^*$ will be clear
from context.  We futher define a feature generating function $\Phi$
that maps states to feature vectors in $\mathbb{R}^d$.  The features
express both the input $\bx$ and previous predictions (actions).
Fig.~\ref{fig:pos} shows an example search
space\footnote{\citet{doppa14hcsearch} discuss several approaches for
  defining a search space.  The theoretical properties of our approach
  do not depend on which search space definition is used.}.

An agent follows a \emph{policy} $\pi \in \Pi$, which chooses an
\emph{action} $a\in A(s)$ at each non-terminal state $s$.  An action
specifies the next state from $s$. We consider policies that only
access state $s$ through its feature vector $\Phi(s)$, meaning that
$\pi(s)$ is a mapping from $\mathbb{R}^d$ to the set of actions
$A(s)$. A \emph{trajectory} is a complete sequence of state/action
pairs from the starting state $b$ to an end state $e$. Trajectories
can be generated by repeatedly executing a policy $\pi$ in the search
space.  Without loss of generality, we assume the lengths of
trajectories are fixed and equal to $T$.  The expected loss of a
policy $J(\pi)$ is the expected loss of the end state of the
trajectory $e \sim \pi$, where $e \in E$ is an end state reached by
following the policy\footnote{Some imitation learning literature
  (e.g., \cite{ross11dagger,daume12coaching}) defines the loss of a
  policy as an accumulation of the costs of states and actions in the
  trajectory generated by the policy. For simplicity, we define the
  loss only based on the end state.  However, our theorems can be
  generalized.}.  Throughout, expectations are taken with respect to
draws of $(\bx,\by)$ from the training distribution, as well as any
internal randomness in the learning algorithm.

An optimal policy chooses the action leading to the minimal expected
loss at each state. For losses decomposable over the states in a
trajectory, generating an optimal policy is trivial given $\by^*$
(e.g., the sequence tagging example in \cite{daume09searn}). In
general, finding the optimal action at states not in the optimal
trajectory can be tricky (e.g.,
\cite{goldberg13oracles,goldberg14tabular}).

\begin{figure}[t]
\centering
\eq{\includegraphics[width=0.75\linewidth]{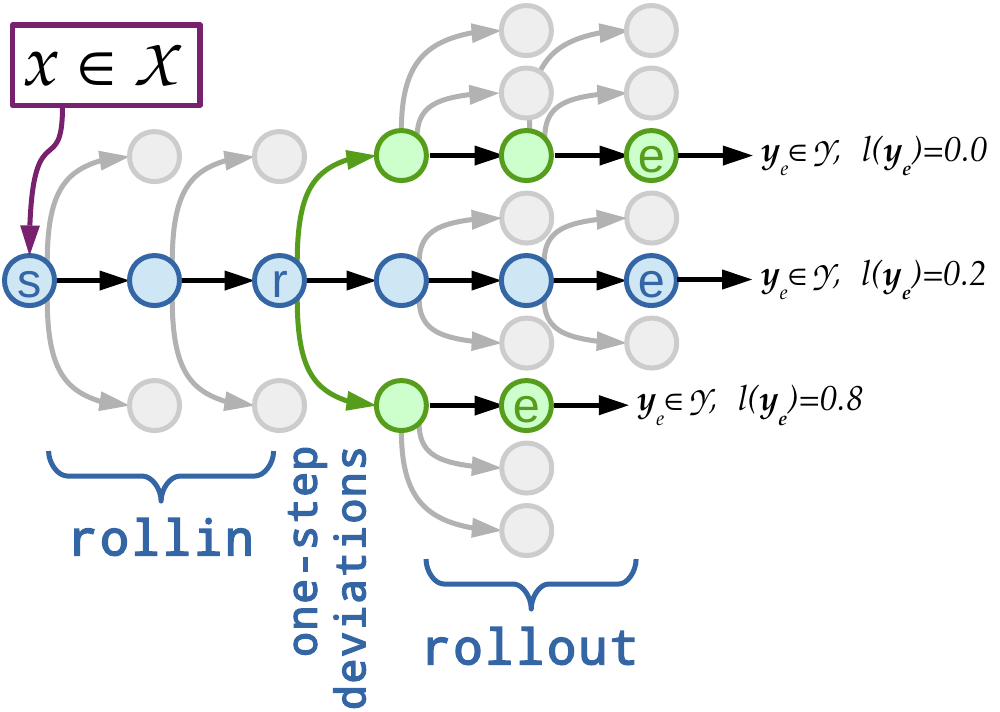}}
{\includegraphics[width=0.6\linewidth]{searchspace}}
\caption{An example search space. The exploration begins at the start
  state $s$ and chooses the middle among three actions by the
  \textbsf{roll-in} policy twice. Grey nodes are not explored. At state $r$ the learning algorithm
  considers the chosen action (middle) \emph{and} both one-step
  deviations from that action (top and bottom). Each of these
  deviations is completed using the \textbsf{roll-out} policy until an
  end state is reached, at which point the loss is collected. Here, we
  learn that deviating to the top action (instead of middle) at state
  $r$ decreases the loss by $0.2$.}
\label{fig:searchspace}
\end{figure}

\begin{algorithm}[t]
  \caption{Locally Optimal Learning to Search (\myalg)}
  \begin{algorithmic}[1]
    \REQUIRE {Dataset $\{\bx_i, \by_i\}_{i=1}^N$ drawn  from $\mathcal{D}
      $ and $\beta \geq 0$: a mixture parameter for roll-out.}
    \STATE Initialize a policy $\pi_{0}$. 
	\FORALL{ $i \in \{1, 2, \ldots, N\} $ (loop over each instance)}
    \STATE Generate a reference policy $\piref$ based on $\by_i$. 
    \STATE Initialize $\trainset = \emptyset$. 
    \FORALL{$t \in \{0,1,2, \ldots, T-1\}$ }
    \STATE Roll-in by executing $\piin=\pil{i}$ for $t$ rounds and
    reach $s_t$.
    \FORALL{$a\in A(s_t)$}
    \STATE Let $\piout\!=\!\piref$ with probability $\beta$, otherwise $\pil{i}$.
    \STATE Evaluate cost $c_{i,t}(a)$ by rolling-out with $\piout$ for $T-t-1$ steps.
    \ENDFOR
    \STATE Generate a feature vector $\Phi(\bx_i, s_t)$.    
    \STATE Set $\trainset = \trainset \cup \{ \langle c_{i,t},\Phi(\bx_i,s_t) \rangle\}$. 
    \ENDFOR
    \STATE $ \pil{i+1} \leftarrow \text{Train}(\pil{i},\trainset)$ 
    (Update).  \label{line:update}
    \ENDFOR
    \STATE Return the average policy across $\pil{0},\pil{1}, \ldots \pil{N}$.
  \end{algorithmic}
  \label{alg:learning}
\end{algorithm}


Finally, like most other \lts algorithms, \myalg assumes access to a
cost-sensitive classification algorithm. A cost-sensitive classifier
predicts a label $\hat{y}$ given an example $\bx$, and receives a
loss $\bc_{\bx}(\hat{y})$, where $\bc_{\bx}$ is a vector containing the cost for
each possible label. In order to perform online updates, we
assume access to a no-regret online cost-sensitive learner, which we
formally define below. 

\begin{definition}
  Given a hypothesis class $\H~:~\mathcal{X}\rightarrow [K]$, the
  regret of an online cost-sensitive classification algorithm which
  produces hypotheses $h_1, \ldots, h_M$ on cost-sensitive example
  sequence $\{(\bx_1, \bc_1), \ldots, (\bx_M, \bc_M)\}$ is
  \begin{equation}
    \regret^{\CS}_M = \sum_{m=1}^M \bc_m(h_m(\bx_m)) - \min_{h \in
      \H} \sum_{m=1}^M \bc_m(h(\bx_m)).
    \label{eqn:cs-regret}
  \end{equation}
  An algorithm is no-regret if $\regret^{\CS}_M = o(M)$.
  \label{defn:no-regret}
\end{definition}

Such no-regret guarantees can be obtained, for instance, by applying
the SECOC technique~\cite{Langford2005} on top of any importance
weighted binary classification algorithm that operates in an online
fashion, examples being the perceptron algorithm or online ridge
regression.

\myalg (see Algorithm \ref{alg:learning}) learns a policy $\pil{} \in
\Pi$ to approximately minimize $J(\pi)$,\footnote{ We can
  parameterize the policy $\pil{}$ using a weight vector $\bw \in
  \mathbb{R}^d$ such that a cost-sensitive classifier can be used to
  choose an action based on the features at each state. We do not
  consider using different weight vectors at different states.}
assuming access to a reference policy $\piref$ (which may or may not
be optimal). The algorithm proceeds in an online fashion generating a
sequence of learned policies $\pil{0}, \pil{1}, \pil{2}, \ldots$.  At
round $i$, a structured sample $(\bx_i, \by_i)$ is observed, and the
configuration of a search space is generated along with the reference
policy $\piref$. Based on $(\bx_i, \by_i)$, \myalg constructs $T$
cost-sensitive multiclass examples using a roll-in policy $\piin$ and
a roll-out policy $\piout$.  The roll-in policy is used to generate an
initial trajectory and the roll-out policy is used to derive the
expected loss.  More specifically, for each decision point $t \in
[0,T)$, \myalg executes $\piin$ for $t$ rounds reaching a state $s_t
  \sim \piin$.  Then, a cost-sensitive multiclass example is generated
  using the features $\Phi(s_t)$. Classes in the multiclass example
  correspond to available actions in state $s_t$.  The cost $c(a)$
  assigned to action $a$ is the difference in loss between taking
  action $a$ and the best action.
\begin{equation}
	c(a) = \ell(e(a)) - \min_{a'} \ell(e(a')),
\end{equation}
where $e(a)$ is the end state reached with rollout by $\piout$ after
taking action $a$ in state $s_t$. \myalg collects the $T$ examples
from the different roll-out points and feeds the set of examples
$\trainset$ into an online cost-sensitive multiclass learner, thereby
updating the learned policy from $\pil{i}$ to $\pil{i+1}$.  By
default, we use the learned policy $\pil{i}$ for roll-in and
a mixture policy for roll-out. For each roll-out, the
mixture policy either executes $\piref$ to an end-state with
probability $\beta$ or $\pil{i}$ with probability $1-\beta$.  \myalg
converts into a batch algorithm with a standard online-to-batch
conversion where the final model $\bar{\pi}$ is generated by
averaging $\pil{i}$ across all rounds (i.e., picking one of $\pil{1},
\ldots \pil{N}$ uniformly at random).

\section{Theoretical Analysis}
\label{sec:analysis}

In this section, we analyze \myalg and answer the questions raised in
Section \ref{sec:intro}. Throughout this section we use $\pibar$ to
denote the average policy obtained by first choosing $n \in [1,N]$ uniformly at random and then acting according to $\pi_n$.
We
begin with discussing the choices of roll-in and roll-out policies.
Table \ref{tab:rollinrollout} summarizes the results of using
different strategies for roll-in and roll-out.

\subsection{The \emph{Bad} Choices}
An obvious \emph{bad} choice is roll-in and roll-out with the
learned policy, because the learner is blind to the
reference policy.  It reduces the structured learning problem to a
reinforcement learning problem, which is much harder. To build
intuition, we show two other \emph{bad} cases.

\begin{table}
\begin{center}
\bgroup
\def\arraystretch{1.5}
\begin{tabular}{|l|c|c|c|}
\hline
{\cellcolor{Gray}}roll-out $\rightarrow$ & {\cellcolor{Gray}} & {\cellcolor{Gray}} & {\cellcolor{Gray}} \\
{\cellcolor{yellow!10}}$\downarrow$ roll-in &
\multirow{-2}{1.8cm}{\centering\bf Reference}\cellcolor{Gray} &
\multirow{-2}{1.4cm}{\centering\bf Mixture}\cellcolor{Gray} &
\multirow{-2}{1.4cm}{\centering\bf Learned}\cellcolor{Gray} \\  
\hline
\cellcolor{yellow!10}\bf Reference & \multicolumn{3}{c|}{Inconsistent\cellcolor{red!10}} 
                                   \\ \hline
\cellcolor{yellow!10}\bf Learned   & Not locally opt. \cellcolor{red!10}
                                   & {Good} \cellcolor{green!10}
									& RL \cellcolor{red!10}
                                   \\
\hline
\end{tabular}
\egroup
\end{center}
\caption{Effect of different roll-in and roll-out policies. The
  strategies marked with ``\emph{Inconsistent}'' might generate a
  learned policy with a large structured regret, and the strategies
  marked with ``\emph{Not locally opt.}'' could be much worse than its
  one step deviation. The strategy marked with ``\emph{RL}'' reduces the 
  structure learning problem to a reinforcement learning problem, which is much 
  harder.  
  The strategy marked with ``\emph{Good}'' is
  favored.  }
\label{tab:rollinrollout}
\end{table}

\tikzstyle{level 1}=[level distance=0.6cm, sibling distance=1.2cm]
\tikzstyle{level 2}=[level distance=0.6cm, sibling distance=0.6cm]
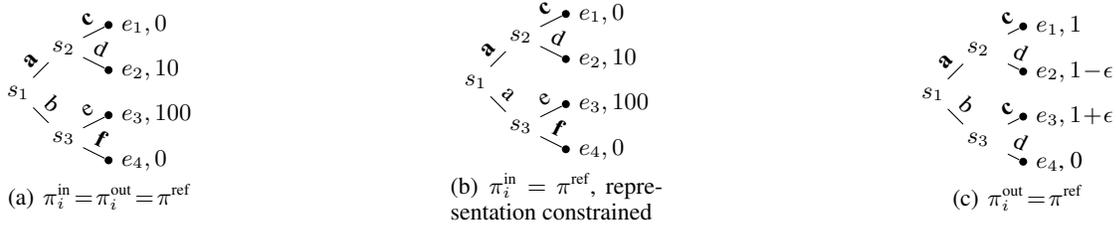
\begin{figure}[t!]
{\small
\centering
 \eq{ \begin{tabular}{@{\hspace{-10pt}}c@{\quad}c@{ }c@{}}}
 { \begin{tabular}{@{\hspace{25pt}}ccc}}
  \begin{minipage}{0.33\linewidth}
    \subfigure[$\piin\!=\!\piout\!=\!\piref$]{
      \begin{tikzpicture}[grow=right, sloped]
	\node[]{$s_1$}
	child {
	  node{$s_3$}
	  child {
	    node[end, label=right:
	      {$e_4, 0$}] {}
	    edge from parent
	    node[above] {{\bf f}}
	  }
	  child {
	    node[end, label=right:
	      {$e_3, 100$}] {}
	    edge from parent
	    node[above] {e}
	  }
	  edge from parent 
	  node[above] {b}
	}
	child {
	  node{$s_2$}
	  child {
	    node[end, label=right:
	      {$e_2, 10$}] {}
	    edge from parent
	    node[above] {d}
	  }
	  child {
	    node[end, label=right:
	      {$e_1, 0$}] {}
	    edge from parent
	    node[above] {{\bf c}}
	  }
	  edge from parent         
	  node[above] {{\bf a}}
	};
      \end{tikzpicture}
      \label{fig:inref}
    }
  \end{minipage}
  &
  \begin{minipage}{0.33\linewidth}
    \subfigure[\small $\piin\!=\!\piref$, representation constrained]{
      \begin{tikzpicture}[grow=right, sloped]
	\node[]{$s_1$}
	child {
	  node{$s_3$}
	  child {
	    node[end, label=right:
	      {$e_4, 0$}] {}
	    edge from parent
	    node[above] {{\bf f}}
	  }
	  child {
	    node[end, label=right:
	      {$e_3, 100$}] {}
	    edge from parent
	    node[above] {e}
	  }
	  edge from parent 
	  node[above] {a}
	}
	child {
	  node{$s_2$}
	  child {
	    node[end, label=right:
	      {$e_2, 10$}] {}
	    edge from parent
	    node[above] {d}
	  }
	  child {
	    node[end, label=right:
	      {$e_1, 0$}] {}
	    edge from parent
	    node[above] {{\bf c}}
	  }
	  edge from parent         
	  node[above] {{\bf a}}
	};
      \end{tikzpicture}
      \label{fig:inrefrepresentation}
    }
  \end{minipage}
  &
  \begin{minipage}{0.33\linewidth}
    \subfigure[$\piout\!=\!\piref$]{
      \begin{tikzpicture}[grow=right, sloped]
	\node[]{$s_1$}
	child {
	  node[bag]{$s_3$}
	  child {
	    node[end, label=right:
	      {$e_4$, 0}] {}
	    edge from parent
	    node[above] {d}
	  }
	  child {
	    node[end, label=right:
	      {$e_3, 1\!+\!\epsilon$}] {}
	    edge from parent
	    node[above] {{\bf c}}
	  }
	  edge from parent 
	  node[above] {b}
	}
	child {
	  node[bag]{$s_2$}
	  child {
	    node[end, label=right:
	      {$e_2, 1\!-\!\epsilon$}] {}
	    edge from parent
	    node[above] {d}
	  }
	  child {
	    node[end, label=right:
	      {$e_1, 1$}] {}
	    edge from parent
	    node[above] {{\bf c}}
	  }
	  edge from parent         
	  node[above] {{\bf a}}
	};
      \end{tikzpicture}
      \label{fig:outref}
    }
  \end{minipage}
  \end{tabular}

  \caption{Counterexamples of $\piin=\piref$ and $\piout=\piref$.  All
    three examples have 7 states. The loss of each end state is
    specified in the figure.  A policy chooses actions to traverse
    through the search space until it reaches an end state. Legal
    policies are bit-vectors, so that a policy with a weight on $a$
    goes up in $s_1$ of Figure~\ref{fig:inref} while a weight on $b$
    sends it down. Since features uniquely identify actions of the
    policy in this case, we just mark the edges with corresponding
    features for simplicity. The reference policy is
    bold-faced.  In Figure~\ref{fig:inrefrepresentation}, the features
    are the same on either branch from $s_1$, so that the learned
    policy can do no better than pick randomly between the two. In
    Figure \ref{fig:outref}, states $s_2$ and $s_3$ share the same
    feature set (i.e., $\Phi(s_2) = \Phi(s_3)$).  Therefore, a policy
    chooses the same set of actions at states $s_2$ and $s_3$.  Please
    see text for details.  }
  \label{fig:counterexample}
  }
\end{figure}



{\bf Roll-in with $\piref$ is \emph{bad}.}  Roll-in with a reference
policy causes the state distribution to be unrealistically good.  As a
result, the learned policy never learns to correct for
previous mistakes, performing poorly when testing.  A related
discussion can be found at Theorem 2.1 in \cite{ross10reductions}. We
show a theorem below.
\begin{theorem}
  For $\piin = \piref$, there is a distribution $D$ over $(\bx, \by)$
  such that the induced cost-sensitive regret $\regret^{\CS}_M = o(M)$
  but $J(\pibar) - J(\piref) = \Omega(1)$.
\end{theorem}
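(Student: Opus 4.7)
The plan is to exhibit the single-instance counterexample of Figure~\ref{fig:inref}, taking $D$ to be a point mass on that instance and $\piout = \piref$, and then to argue that because roll-in with $\piref$ deterministically avoids the off-trajectory state $s_3$, the cost-sensitive regret is completely blind to what the learner does there. Concretely, every round of \myalg generates training examples only at $\Phi(s_1)$ and $\Phi(s_2)$, with cost vectors $c(c) = 0, c(d) = 10$ at $s_2$ (rolling $\piref$ out through $c$ reaches $e_1$, through $d$ reaches $e_2$) and $c(a) = 0, c(b) = 0$ at $s_1$ (since $\piref$ picks $c$ at $s_2$ reaching $e_1$, and picks $f$ at $s_3$ reaching $e_4$). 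The two actions at $s_1$ are thus tied in the observed costs.

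I would then construct a specific no-regret learner whose averaged policy $\pibar$ picks $b$ at $s_1$ (legal, since $a$ and $b$ tie in the observed costs), $c$ at $s_2$ (the unique argmin of the observed costs), and $e$ at $s_3$ (arbitrary under an adversarial initialization, since the bit-vector hypothesis class has separate weights for the $s_3$ edges and those weights are never updated by training). Summed across the $M$ rounds this yields $\regret^{\CS}_M = 0 = o(M)$, while the trajectory induced by $\pibar$ is $s_1 \to s_3 \to e_3$ with loss $100$, whereas $\piref$ reaches $e_1$ with loss $0$; hence $J(\pibar) - J(\piref) = \Omega(1)$. The main obstacle is conceptual rather than calculational: one has to articulate carefully that the no-regret guarantee only controls aggregate loss on the \emph{generated} cost-sensitive stream and therefore cannot possibly constrain the learner at states the roll-in never visits. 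Once that observation is made precise, the rest is direct bookkeeping on the figure.
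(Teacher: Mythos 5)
Your construction for the sub-case $\piout = \piref$ is correct and is essentially the paper's own first argument on Figure~\ref{fig:inref}: roll-in with $\piref$ never generates a cost-sensitive example at $s_3$, the two actions at $s_1$ tie at cost $0$ under reference roll-out, so a zero-cost-sensitive-regret learner may take $b$ at $s_1$ and $e$ at $s_3$, incurring structured loss $100$ against $J(\piref)=0$. Your explicit cost vectors and the observation that no-regret only constrains the generated stream are exactly the right points.

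However, there is a genuine gap: the theorem is meant to show that reference roll-in is inconsistent for \emph{every} choice of roll-out (this is the entire ``Reference'' row of Table~\ref{tab:rollinrollout}, and Algorithm~\ref{alg:learning} leaves $\beta$ free), and your single counterexample fails once $\piout$ is the learned or mixture policy. In Figure~\ref{fig:inref}, if the roll-out from the one-step deviation $b$ at $s_1$ is completed by a learned policy that takes $e$ at $s_3$, the generated example at $s_1$ reports $c(b)=100$ versus $c(a)=0$, so any no-regret learner is driven back to action $a$ and $J(\pibar)\rightarrow J(\piref)$ --- the lower bound evaporates. The paper closes this case with the second construction (Figure~\ref{fig:inrefrepresentation}), in which the features at $s_1$ cannot distinguish $a$ from $b$, forcing the policy to split between the branches; the algorithm still never generates a cost-sensitive example \emph{at} $s_3$ (a roll-in with $\piref$ cannot reach it, and a deviation to $b$ at $s_1$ must be followed by a roll-out, not another decision point), so choosing $e$ there again gives zero cost-sensitive regret with $\Omega(1)$ structured regret. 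You need this second example, or something equivalent, to cover learned and mixture roll-outs.
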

\begin{proof}
  We demonstrate examples where the claim is true.

  We start with the case where $\piout = \piin = \piref$. In this
  case, suppose we have one structured example, whose search space is
  defined as in Figure \ref{fig:inref}. From state $s_1$, there are
  two possible actions: $a$ and $b$ (we will use actions and features
  interchangeably since features uniquely identify actions here); the
  (optimal) reference policy takes action $a$.  From state $s_2$,
  there are again two actions ($c$ and $d$); the reference takes
  $c$. Finally, even though the reference policy would never visit
  $s_3$, from that state it chooses action $f$.  When rolling in with
  $\piref$, the cost-sensitive examples are generated only at state
  $s_1$ (if we take a one-step deviation on $s_1$) and $s_2$ but
  \emph{never} at $s_3$ (since that would require a two deviations,
  one at $s_1$ and one at $s_3$). As a result, we can never learn how
  to make predictions at state $s_3$.  Furthermore, under a rollout
  with $\piref$, both actions from state $s_1$ lead to a loss of zero.
  The learner can therefore learn to take action $c$ at state $s_2$
  and $b$ at state $s_1$, and achieve \emph{zero} cost-sensitive
  regret, thereby ``thinking'' it is doing a good job.  Unfortunately,
  when this policy is actually run, it performs as badly as possible
  (by taking action $e$ half the time in $s_3$), which results in the
  large structured regret.

  Next we consider the case where $\piout$ is either the learned
  policy or a mixture with $\piref$. When applied to the example in
  Figure~\ref{fig:inrefrepresentation}, our feature representation is
  not expressive enough to differentiate between the two actions at
  state $s_1$, so the learned policy can do no better than pick
  randomly between the top and bottom branches from this state. The
  algorithm either rolls in with $\piref$ on $s_1$ and generates a
  cost-sensitive example at $s_2$, or generates a cost-sensitive
  example on $s_1$ and then completes a roll out with
  $\piout$. Crucially, the algorithm still never generates a
  cost-sensitive example at the state $s_3$ (since it would have
  already taken a one-step deviation to reach $s_3$ and is constrained
  to do a roll out from $s_3$). As a result, if the learned policy
  were to choose the action $e$ in $s_3$, it leads to a zero
  cost-sensitive regret but large structured regret.\end{proof}

  Despite these negative results, rolling in with the learned policy
  is robust to both the above failure modes. In
  Figure~\ref{fig:inref}, if the learned policy picks action $b$ in
  state $s_1$, then we can roll in to the state $s_3$, then generate a
  cost-sensitive example and learn that $f$ is a better action than
  $e$. Similarly, we also observe a cost-sensitive example in $s_3$ in
  the example of Figure~\ref{fig:inrefrepresentation}, which clearly
  demonstrates the benefits of rolling in with the learned policy as
  opposed to $\piref$.

{\bf Roll-out with $\piref$ is \emph{bad} if $\piref$ is not
  optimal. } When the reference policy is not optimal \emph{or} the
reference policy is not in the hypothesis class, roll-out with
$\piref$ can make the learner blind to compounding errors.  The
following theorem holds. We state this in terms of ``local
optimality'': a policy is locally optimal if changing any \emph{one}
decision it makes never improves its performance.
\begin{theorem}
  For $\piout = \piref$, there is a distribution $D$ over $(\bx, \by)$
  such that the induced cost-sensitive regret $\regret^{\CS}_M = o(M)$
  but $\pibar$ has arbitrarily large structured regret to one-step deviations.
\end{theorem}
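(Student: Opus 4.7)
The plan is to exhibit a single concrete instance, namely the search space of Figure~\ref{fig:outref}, and argue that with $\piout = \piref$ the algorithm can drive its cost-sensitive regret to zero while learning a policy whose structured loss is $\Omega(1)$ worse than an obvious one-step deviation. Scaling the leaf losses by an arbitrary multiplicative constant $C$ then converts this $\Omega(1)$ gap into arbitrarily large one-step regret, which is what the statement demands.

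First I would fix $D$ to be a point mass on a single $(\bx, \by)$ whose induced search space is Figure~\ref{fig:outref}, with the crucial representational constraint $\Phi(s_2) = \Phi(s_3)$ so that every legal policy is forced to pick the same action from both states. With this constraint in place, I would enumerate the cost-sensitive examples the algorithm actually constructs when rolling in with $\pil{i}$ and rolling out with $\piref$. At $s_1$, rolling out with $\piref$ after the deviation gives cost $\ell(e_1) = 1$ for action $a$ and $\ell(e_3) = 1+\epsilon$ for action $b$, so the recommended action at $s_1$ is $a$. After rolling in to whichever of $s_2, s_3$ is reached, the two one-step deviations produce end-state costs that differ by exactly $\epsilon$, with action $d$ strictly preferred (costs $1-\epsilon$ at $s_2$, $0$ at $s_3$).

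Next I would observe that the policy $\pibar$ that always plays $a$ at $s_1$ and always plays $d$ at the shared feature vector of $\{s_2, s_3\}$ is consistent with every cost-sensitive label produced above: on each of the finitely many distinct examples, it picks the minimum-cost action. Thus $\regret^{\CS}_M = 0 = o(M)$, and the no-regret cost-sensitive learner is entitled to converge to exactly this $\pibar$. The trajectory actually executed by $\pibar$ is $s_1 \to s_2 \to e_2$, with structured loss $1-\epsilon$. The one-step deviation that swaps $a$ for $b$ at $s_1$ sends the agent down to $s_3$, where $\pibar$ (still constrained by shared features) plays $d$ and terminates at $e_4$ with loss $0$. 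So a single-decision change reduces the loss by $1-\epsilon$, i.e.\ $\pibar$ is $(1-\epsilon)$-suboptimal relative to its one-step deviations; multiplying every leaf loss by $C$ makes this gap $C(1-\epsilon)$, which can be made arbitrarily large.

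The main conceptual point to be careful about, which is where the argument really lives, is \emph{why} roll-out with $\piref$ hides the benefit of action $b$ at $s_1$. Because $\piref$ deterministically plays $c$ at $s_3$, the roll-out after the deviation to $b$ lands at $e_3$ (loss $1+\epsilon$) rather than at the genuinely reachable $e_4$ (loss $0$) that the learned policy would in fact attain. In other words, $\piref$ masks the compounding improvement that a single deviation would actually produce once combined with the learner's own choice of $d$ at the shared feature vector. Making this explicit---and verifying that the representational tying of $s_2$ and $s_3$ is what couples the two decisions so that a one-step deviation at $s_1$ is strictly beneficial---is the only step that requires care; the rest is direct accounting of costs on a finite tree.
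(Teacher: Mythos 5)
Your proposal is correct and takes essentially the same route as the paper's own proof: the same counterexample (the search space of Figure~\ref{fig:outref} with $\Phi(s_2)=\Phi(s_3)$), the same zero-cost-sensitive-regret policy ($a$ at $s_1$, $d$ at the shared state) with loss $1-\epsilon$, and the same improving one-step deviation $a\rightarrow b$ reaching loss $0$. Your only additions---explicitly rescaling the leaf losses to make the gap arbitrarily large, and spelling out why roll-out with $\piref$ masks the improvement---are refinements the paper leaves implicit (the small slip that the two actions at $s_3$ differ by $1+\epsilon$ rather than $\epsilon$ is harmless).
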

\begin{proof}
  Suppose we have only one structured example, whose search space is
  defined as in Figure \ref{fig:outref} and the reference policy
  chooses $a$ or $c$ depending on the node. If we roll-out with
  $\piref$, we observe expected losses 1 and $1+\epsilon$ for actions
  $a$ and $b$ at state $s_1$, respectively.  Therefore, the policy
  with zero cost-sensitive classification regret chooses actions $a$
  and $d$ depending on the node.  However, a one step deviation
  ($a\rightarrow b$) does radically better and can be learned by
  instead rolling out with a mixture policy.
\end{proof}

The above theorems show the bad cases and motivate a good $\lts$
algorithm which generates a learned policy that competes with the
reference policy and deviations from the learned policy.  In the
following section, we show that Algorithm \ref{alg:learning} is such
an algorithm.

\subsection{Regret Guarantees}
\label{sec:regret}
Let $Q^\pi(s_{t}, a)$ represent the expected loss of executing action
$a$ at state $s_t$ and then executing policy $\pi$ until reaching an
end state. $T$ is the number of decisions required before reaching an
end state.  For notational simplicity, we use $Q^\pi(s_t,\pi')$ as a
shorthand for $Q^\pi(s_t,\pi'(s_t))$, where $\pi'(s_t)$ is the action
that $\pi'$ takes at state $s_t$. Finally, we use $\state{t}{\pi}$ to
denote the distribution over states at time $t$ when acting according
to the policy $\pi$.
The expected loss of a policy is:
\begin{equation}
	\label{eq:j}
J(\pi) = \E_{s\sim\state{t}{\pi}}\left[Q^{\pi}(s,\pi)\right],
\end{equation}
for any $t \in [0,T]$. In words, this is the expected cost of rolling
in with $\pi$ up to some time $t$, taking $\pi$'s action at time $t$
and then completing the roll out with $\pi$. 

Our main regret guarantee for Algorithm~\ref{alg:learning} shows that
\myalg minimizes a combination of regret to the reference
policy $\piref$ and regret its own one-step deviations. 
In order to concisely present the result, we present an additional
definition which captures the regret of our approach:
\eq{
  \begin{small}
\begin{align}
  \label{eqn:epsilons}
  \nonumber \epsfull &= \frac{1}{NT} \sum_{i=1}^N \sum_{t=1}^T \E_{s
    \sim \state{t}{\pil{i}}} \left[Q^{\piout}(s, \pil{i}) -
  \left(\beta \min_a \Qref(s, a) \right. \right.\\&\qquad\qquad\left. \left.+
  (1-\beta)\min_a Q^{\pil{i}}(s,a) \right)\right],
\end{align}
\end{small}
}{
\begin{equation}
  \label{eqn:epsilons}
  \epsfull = \frac{1}{NT} \sum_{i=1}^N \sum_{t=1}^T \E_{s
    \sim \state{t}{\pil{i}}} \left[Q^{\piout}(s, \pil{i}) -
  \left(\beta \min_a \Qref(s, a) +
  (1-\beta)\min_a Q^{\pil{i}}(s,a) \right)\right],
\end{equation}
}
where $\piout = \beta \piref + (1-\beta)\pil{i}$ is the mixture policy
used to roll-out in Algorithm~\ref{alg:learning}. With these
definitions in place, we can now state our main result for
Algorithm~\ref{alg:learning}.

\begin{theorem}
  \label{thm:regret}
  Let $\epsfull$ be as defined in Equation~\ref{eqn:epsilons}.
  The averaged policy $\pibar$ generated by running $N$ steps of
  Algorithm~\ref{alg:learning} with a mixing parameter $\beta$
  satisfies
  \begin{small}
    \begin{align*}
    \beta (J(\pibar) - J(\piref)) + (1 - \beta) & \sum_{t=1}^T\big(
    J(\pibar) - \min_{\pi \in \Pi} \E_{s \sim \state{t}{\pibar}}
    [Q^{\pibar}(s,\pi)]\big) \\ &\leq T\epsfull.
    \end{align*}
  \end{small}
\end{theorem}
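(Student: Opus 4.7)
The plan is to reduce both terms on the left-hand side to per-step expected differences along the state distributions $\state{t}{\pil{i}}$---the very distributions from which the cost-sensitive learner's training examples are drawn---and then fold them into the single expectation that defines $\epsfull$.

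For a fixed $i \in \{1, \ldots, N\}$, the standard telescoping argument through the hybrid policies ``run $\pil{i}$ for $t$ steps, then $\piref$'' yields the performance difference identity
\begin{align*}
J(\pil{i}) - J(\piref) &= \sum_{t=1}^T \E_{s \sim \state{t}{\pil{i}}}\bigl[Q^{\piref}(s, \pil{i}) - Q^{\piref}(s, \piref)\bigr] \\
&\leq \sum_{t=1}^T \E_{s \sim \state{t}{\pil{i}}}\bigl[Q^{\piref}(s, \pil{i}) - \min_a Q^{\piref}(s, a)\bigr],
\end{align*}
where the inequality just uses $Q^{\piref}(s, \piref) \geq \min_a Q^{\piref}(s, a)$. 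For the one-step-deviation term I combine the identity $J(\pil{i}) = \E_{s \sim \state{t}{\pil{i}}}[Q^{\pil{i}}(s, \pil{i})]$ (valid for every $t$) with $\min_{\pi \in \Pi} \E[Q^{\pil{i}}(s, \pi)] \geq \E[\min_a Q^{\pil{i}}(s, a)]$ to obtain
\begin{equation*}
J(\pil{i}) - \min_{\pi \in \Pi} \E_{s \sim \state{t}{\pil{i}}}[Q^{\pil{i}}(s, \pi)] \leq \E_{s \sim \state{t}{\pil{i}}}\bigl[Q^{\pil{i}}(s, \pil{i}) - \min_a Q^{\pil{i}}(s, a)\bigr].
\end{equation*}

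Taking the convex combination of these two bounds with weights $\beta$ and $(1-\beta)$, and using $\beta Q^{\piref}(s, a) + (1-\beta) Q^{\pil{i}}(s, a) = Q^{\piout}(s, a)$ to collapse the two positive contributions into $Q^{\piout}(s, \pil{i})$, puts the resulting per-$i$ bound in precisely the form of the summand defining $\epsfull$. Averaging over $i = 1, \ldots, N$ then produces $T\epsfull$ on the right. On the left, $\frac{1}{N}\sum_i J(\pil{i}) = J(\pibar)$ handles the first term exactly, while for the one-step-deviation term the inequality $\frac{1}{N}\sum_i \min_\pi f_i(\pi) \leq \min_\pi \frac{1}{N}\sum_i f_i(\pi)$ moves the average inside the minimum, recovering $J(\pibar) - \min_{\pi} \E_{s \sim \state{t}{\pibar}}[Q^{\pibar}(s,\pi)]$ with the inequality going in the direction we need.

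The main subtlety is the interpretation of the averaged policy inside the one-step-deviation term: since $\pibar$ samples $i$ once and then commits to $\pil{i}$, one must read $\E_{s \sim \state{t}{\pibar}}[Q^{\pibar}(s, \pi)]$ as $\frac{1}{N}\sum_i \E_{s \sim \state{t}{\pil{i}}}[Q^{\pil{i}}(s, \pi)]$ for the min/average swap to preserve the direction of the bound. Beyond this, the argument is bookkeeping---no concentration or martingale step is needed, because $\epsfull$ is defined so as to already absorb both the cost-sensitive learner's behavior and the slack between $\min_{\pi \in \Pi}$ and $\min_a$.
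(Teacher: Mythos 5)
Your proposal is correct and follows essentially the same route as the paper's proof: the Ross--Bagnell performance-difference lemma for the $J(\pibar)-J(\piref)$ term, the identity $J(\pil{i})=\E_{s\sim\state{t}{\pil{i}}}[Q^{\pil{i}}(s,\pil{i})]$ plus the relaxation $\min_{\pi\in\Pi}\E[Q^{\pil{i}}(s,\pi)]\geq\E[\min_a Q^{\pil{i}}(s,a)]$ for the deviation term, the $\beta$-convex combination collapsed via $Q^{\piout}=\beta\Qref+(1-\beta)Q^{\pil{i}}$, and the min/average swap when passing to $\pibar$. If anything, you are slightly more careful than the paper at that last step, where the paper writes the exchange of the average and the minimum as an equality although it only holds as an inequality in the needed direction.
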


It might appear that the LHS of the theorem combines one term which is
constant to another scaling with $T$. We point the reader to
Lemma~\ref{lemma:regret-onestep} in the appendix to see why the terms
are comparable in magnitude.  Note that the theorem does not assume
anything about the quality of the reference policy, and it might be
arbitrarily suboptimal. Assuming that Algorithm~\ref{alg:learning}
uses a no-regret cost-sensitive classification algorithm (recall
Definition~\ref{defn:no-regret}), the first term in the definition of
$\epsfull$ converges to
\begin{equation*}
  \ell^* = \min_{\pi \in \Pi} \frac{1}{NT} \sum_{i=1}^N \sum_{t=1}^T
  \E_{s \sim \state{t}{\pil{i}}} [Q^{\piout}(s, \pi)].
\end{equation*}
This observation is formalized in the next corollary.

\begin{corollary}
Suppose we use a no-regret cost-sensitive classifier in
Algorithm~\ref{alg:learning}. As $N \rightarrow \infty$, $\epsfull
\rightarrow \epsclass$, where
\eq{
\begin{align}
  \nonumber \epsclass = \ell^* - &\frac{1}{NT} \sum_{i,t} \E_{s \sim
    \state{t}{\pil{i}}} \left[\beta \min_a \Qref(s, a)
    \right.\\&\qquad\qquad\left.+ (1-\beta)\min_a
    Q^{\pil{i}}(s,a)\right]. \nonumber
\end{align}
}{
\begin{equation*}
  \nonumber \epsclass = \ell^* - \frac{1}{NT} \sum_{i,t} \E_{s \sim
    \state{t}{\pil{i}}} \left[\beta \min_a \Qref(s, a)
    + (1-\beta)\min_a
    Q^{\pil{i}}(s,a)\right]. 
\end{equation*}
}
\label{cor:noregret}
\end{corollary}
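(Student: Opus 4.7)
The plan is to reduce Corollary~\ref{cor:noregret} to a direct application of the no-regret property of the online cost-sensitive learner. Substituting the definitions of $\epsfull$ and $\epsclass$ and canceling the common $\beta\min_a \Qref$ and $(1-\beta)\min_a Q^{\pil{i}}$ terms shows
\begin{equation*}
NT\,(\epsfull - \epsclass) = \sum_{i,t} \E_{s \sim \state{t}{\pil{i}}}\!\big[Q^{\piout}(s, \pil{i})\big] - \min_{\pi \in \Pi} \sum_{i,t} \E_{s \sim \state{t}{\pil{i}}}\!\big[Q^{\piout}(s, \pi)\big],
\end{equation*}
so it suffices to show this gap is $o(NT)$.

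First, I would identify the cost-sensitive examples that Algorithm~\ref{alg:learning} actually feeds to its online learner. At iteration $i$ and decision point $t$, it produces an example with features $\Phi(\bx_i, s_t)$, where $s_t \sim \state{t}{\pil{i}}$ is reached by rolling in with $\pil{i}$, and with a cost vector $c_{i,t}$ whose $a$-th entry is the loss of taking action $a$ at $s_t$ and then rolling out with $\piout$. Conditional on $s_t$, this cost is unbiased: $\E[c_{i,t}(a)\mid s_t] = Q^{\piout}(s_t, a)$. The learner's own prediction on this example is $\pil{i}(s_t)$, since $\pil{i}$ is the hypothesis in force before the update step.

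Second, I would invoke Definition~\ref{defn:no-regret} with $M = NT$. For whatever sequence of examples is observed,
\begin{equation*}
\sum_{i,t} c_{i,t}(\pil{i}(s_t)) \leq \min_{\pi \in \Pi} \sum_{i,t} c_{i,t}(\pi(s_t)) + \regret^{\CS}_{NT},
\end{equation*}
with $\regret^{\CS}_{NT} = o(NT)$ by assumption. Taking expectations over the random states and sampled rollouts, exchanging the expectation with the minimum on the right via Jensen's inequality ($\E[\min_\pi X_\pi] \leq \min_\pi \E[X_\pi]$), and applying the conditional unbiasedness of $c_{i,t}$, I obtain
\begin{equation*}
\sum_{i,t} \E\big[Q^{\piout}(s, \pil{i})\big] \leq \min_{\pi \in \Pi} \sum_{i,t} \E\big[Q^{\piout}(s, \pi)\big] + o(NT).
\end{equation*}
Dividing by $NT$ yields $\epsfull - \epsclass \leq o(1)$, which is the claimed convergence.

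The main obstacle is conceptual rather than technical: the examples fed to the cost-sensitive learner are neither i.i.d.\ nor oblivious-adversarial but \emph{adaptive}, since both the state distribution $\state{t}{\pil{i}}$ and (for $\beta < 1$) the cost function depend on the current hypothesis $\pil{i}$. What rescues the argument is that Definition~\ref{defn:no-regret} bounds the realized regret on an arbitrary sequence with no assumption on how the sequence is generated, so the adaptive dependence is allowed. Jensen's inequality then applies in the favorable direction to move the expectation inside the outer minimum, and the minor issue of $c_{i,t}$ being a stochastic realization of $Q^{\piout}(s_t,\cdot)$ is absorbed by linearity of expectation.
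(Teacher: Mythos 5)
Your proposal is correct and follows essentially the same route as the paper: identify the cost-sensitive examples fed to the online learner, use the unbiasedness of the roll-out costs conditional on the roll-in state, apply the no-regret guarantee on the adaptively generated sequence, and pass the expectation inside the minimum over $\Pi$. The only imprecision is that $\E[c_{i,t}(a)\mid s_t]$ equals $Q^{\piout}(s_t,a)$ only up to the action-independent offset $-\E[\min_{a'}\ell(e(a'))\mid s_t]$ coming from the baseline subtraction in the cost definition; as the paper notes, this offset cancels in the regret difference, so your conclusion stands.
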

When we have $\beta = 1$, so that \myalg becomes almost identical to
\aggrevate~\cite{ross14aggrevate}, $\epsclass$ arises solely due to
the policy class $\Pi$ being restricted. For other values of $\beta
\in (0,1)$, the asymptotic gap does not always vanish even if the
policy class is unrestricted, since $\ell^*$ amounts to obtaining
$\min_a Q^{\piout}(s, a)$ in each state. This corresponds to taking a
minimum of an average rather than the average of the corresponding
minimum values. 

In order to avoid this asymptotic gap, it seems desirable to have
regrets to reference policy and one-step deviations controlled
individually, which is equivalent to having the guarantee of
Theorem~\ref{thm:regret} for all values of $\beta$ in $[0,1]$ rather
than a specific one. As we show in the next section, guaranteeing
a regret bound to one-step deviations when the reference policy is
arbitrarily bad is rather tricky and can take an exponentially long
time. Understanding structures where this can be done more tractably
is an important question for future research. Nevertheless, the result
of Theorem~\ref{thm:regret} has interesting consequences in several
settings, some of which we discuss next.

\begin{enumerate}[itemsep=0.1em,topsep=0em,parsep=0.1em]
\item The second term on the left in the theorem is always
  non-negative by definition, so the conclusion of
  Theorem~\ref{thm:regret} is at least as powerful as existing regret
  guarantee to reference policy when $\beta = 1$.  Since the previous
  works in this area~\cite{daume09searn,ross11dagger,ross14aggrevate}
  have only studied regret guarantees to the reference policy, the
  quantity we're studying is strictly more difficult. 

\item The asymptotic regret incurred by using a mixture policy for
  roll-out might be larger than that using the reference policy alone,
  when the reference policy is near-optimal.  How the combination of
  these factors manifests in practice is empirically evaluated in
  Section~\ref{sec:exp}.

\item When the reference policy is optimal, the first term is
  non-negative. Consequently, the theorem demonstrates that our
  algorithm competes with one-step deviations in this case. This is
  true irrespective of whether $\piref$ is in the policy class $\Pi$
  or not.


\item When the reference policy is very suboptimal, then the first
  term can be negative. In this case, the regret to one-step
  deviations can be large despite the guarantee of
  Theorem~\ref{thm:regret}, since the first negative term allows the
  second term to be large while the sum stays bounded. However, when
  the first term is significantly negative, then the learned policy
  has already improved upon the reference policy substantially! This
  ability to improve upon a poor reference policy by using a mixture
  policy for rolling out is an important distinction for
  Algorithm~\ref{alg:learning} compared with previous approaches.
\end{enumerate}

Overall, Theorem~\ref{thm:regret} shows that the learned policy is
either competitive with the reference policy \emph{and} nearly locally
optimal, or improves substantially upon the reference policy.

\subsection{Hardness of local optimality}

In this section we demonstrate that the process of reaching a local
optimum (under one-step deviations) can be exponentially slow when the
initial starting policy is arbitrary. This reflects the hardness of
learning to search problems when equipped with a poor reference
policy, even if local rather than global optimality is considered a
yardstick. We establish this lower bound for a class of algorithms
substantially more powerful than \myalg.  We start by defining a
search space and a policy class. Our search space consists of
trajectories of length $T$, with $2$ actions available at each step of
the trajectory. We use $0$ and $1$ to index the two actions. We
consider policies whose only feature in a state is the depth of the
state in the trajectory, meaning that the action taken by any policy
$\pi$ in a state $s_t$ depends only on $t$. Consequently, each policy
can be indexed by a bit string of length $T$. For instance, the policy
$0100\ldots0$ executes action $0$ in the first step of any trajectory,
action $1$ in the second step and $0$ at all other levels. It is
easily seen that two policies are one-step deviations of each other if
the corresponding bit strings have a Hamming distance of 1.

To establish a lower bound, consider the following powerful
algorithmic pattern. Given a current policy $\pi$, the algorithm
examines the cost $J(\pi')$ for all the one-step deviations $\pi'$ of
$\pi$. It then chooses the policy with the smallest cost as its new
learned policy. Note that access to the actual costs $J(\pi)$ makes
this algorithm more powerful than existing \lts algorithms, which can
only estimate costs of policies through rollouts on individual
examples. Suppose this algorithm starts from an initial policy
$\pil{0}$.  How long does it take for the algorithm to reach a policy
$\pil{i}$ which is locally optimal compared with all its one-step
deviations?  We next present a lower bound for algorithms of this style.

\begin{theorem}
  Consider any algorithm which updates policies only by moving from
  the current policy to a one-step deviation. Then there is a search
  space, a policy class and a cost function where the any such
  algorithm must make $\Omega(2^T)$ updates before reaching a locally
  optimal policy. Specifically, the lower bound also applies to
  Algorithm~\ref{alg:learning}. 
  \label{thm:snakes}
\end{theorem}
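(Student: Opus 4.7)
The plan is to exhibit a single structured-prediction instance on which every algorithm restricted to updating the learned policy by one-step deviations is forced into an exponentially long chain of strictly improving updates before reaching any local optimum.

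Reuse the setup from the paragraphs preceding the theorem: trajectories of length $T$ with two actions per step, so every policy is a bit string in $\{0,1\}^T$, one-step deviations are exactly Hamming-distance-$1$ pairs, and the algorithm's update graph is the Boolean hypercube. Because each policy's action depends only on depth, each bit string labels a distinct end state, and any function $J:\{0,1\}^T\to\mathbb{R}_{\ge 0}$ can be realised as the structured loss.

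The adversarial cost is built around an induced path (snake-in-the-box) $\pi_0,\pi_1,\ldots,\pi_L$ in the $T$-cube of length $L=\Omega(2^T)$, whose existence is classical (e.g.\ Abbott-Katchalski). Define $J(\pi_k) = L-k$ along the snake and $J(v) = L+1$ for every vertex $v$ off the snake. Because the path is induced, the only snake-neighbors of $\pi_k$ are $\pi_{k-1}$ and $\pi_{k+1}$, while its remaining $T-2$ hypercube-neighbors lie off-snake with $J$-value $L+1 > L-k = J(\pi_k)$. Thus the unique strictly-improving one-step deviation from $\pi_k$ (for $k<L$) is $\pi_{k+1}$, and $\pi_L$ is the unique local optimum reachable from $\pi_0$ by strictly-improving moves. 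Any algorithm of the stated form starting at $\pi_0$ must therefore traverse the entire snake, making $L=\Omega(2^T)$ updates before terminating at $\pi_L$. The lower bound extends to Algorithm~\ref{alg:learning} because in this two-action hypercube setting its cost-sensitive update on the rollout-generated examples reduces to picking a strictly-improving one-step deviation at some depth, so \myalg falls inside the class considered.

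The main obstacle is the combinatorial step: a naive decreasing assignment of $J$ along a Gray-code path fails because shortcut edges between far-apart Gray-code vertices at Hamming distance one provide faster descents. One must therefore use a genuine induced path in which no two non-consecutive vertices are hypercube-adjacent; the snake-in-the-box literature supplies such a path of length $\Omega(2^T)$. Once this combinatorial object is in hand, the uniform off-snake penalty $L+1$ guarantees that no shortcut descents exist and the lower bound follows immediately.
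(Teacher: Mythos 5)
Your proposal is correct and follows essentially the same route as the paper's proof: both take a snake-in-the-box (induced path) of length $\Omega(2^T)$ in the $T$-dimensional hypercube of depth-indexed policies, assign monotonically decreasing costs along the snake and maximal cost off it, and conclude that any best-one-step-deviation algorithm must traverse the whole path, with the same final check that \myalg's cost-sensitive updates are confined to one-step deviations on this instance. Your explicit remark that a non-induced Gray-code path would admit shortcut descents is a nice clarification of why the induced-path property is essential, but it does not change the argument.
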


The result shows that competing with the seemingly reasonable
benchmark of one-step deviations may be very challenging from an
algorithmic perspective, at least without assumptions on the search
space, policy class, loss function, or starting policy.  For instance,
the construction used to prove Theorem~\ref{thm:snakes} does not apply
to Hamming loss.

\section{Structured Contextual Bandit}
\label{sec:structbandit}
We now show that a variant of \myalg can be run in a ``structured
contextual bandit'' setting, where only the loss of a single
structured label can be observed.  As mentioned, this setting has
applications to webpage layout, personalized search, and several other
domains.

At each round, the learner is given an input example $\bx$, makes a
prediction $\hat{\by}$ and suffers structured loss $\ell(\by^*,
\hat{\by})$.  We assume that the structured losses lie in the interval
$[0,1]$, that the search space has depth $T$ and that there are at
most $K$ actions available at each state.  As before, the algorithm
has access to a policy class $\Pi$, and also to a reference policy
$\piref$.  It is important to emphasize that the reference policy does
not have access to the true label, and the goal is improving on the
reference policy.

\begin{algorithm}[t]
  \caption{Structured Contextual Bandit Learning}
  \begin{algorithmic}[1]
    \REQUIRE {Examples $\{\bx_i\}_{i=1}^N$, reference policy $\piref$,
      exploration probability $\epsilon$ and  mixture parameter
      $\beta\geq 0$.}    
    \STATE Initialize a policy $\pi_{0}$, and set $\explset =
    \emptyset$.  
    \FORALL{ $i=1,2,\ldots,N$ (loop over each instance)}
    \STATE Obtain the example $\bx_i$, set $\explore = 1$ with
    probability $\epsilon$, set $\numexpl{i} = |\explset|$. 
    \IF{$\explore$}
    \STATE Pick random time $t \in \{0,1, \ldots, T-1\}$.
    \STATE Roll-in by executing $\piin=\pil{\numexpl{i}}$ for $t$
    rounds and reach $s_t$.
    \STATE Pick random action $a_t \in A(s_t)$; let $K = |A(s_t)|$.
    \STATE Let $\piout = \piref$ with probability $\beta$, otherwise
    $\pil{\numexpl{i}}$. 
    \STATE Roll-out with $\piout$ for $T-t-1$ steps to evaluate 
    \begin{equation*}
      \hat{c}(a) = K \ell(e(a_t)) \mathbf{1}[a = a_t].
    \end{equation*}
    \STATE Generate a feature vector $\Phi(\bx_i, s_t)$.    
    \STATE $ \pil{\numexpl{i}+1} \leftarrow \text{Train}(\pil{\numexpl{i}},\hat{c},
    \Phi(\bx_i, s_t))$.  
    \label{line:cbupdate}
    \STATE Augment $\explset = \explset \cup \{\pil{\numexpl{i}+1}\}$
    \ELSE
    \STATE Follow the trajectory of a policy $\pi$ drawn randomly from
    $\explset$ to an end state $e$, predict the corresponding
    structured output $\by_{ie}$.
    \ENDIF
    \ENDFOR
  \end{algorithmic}
  \label{alg:structcb}
\end{algorithm}

Our approach is based on the $\epsilon$-greedy algorithm which is a
common strategy in partial feedback problems. Upon receiving an
example $\bx_i$, the algorithm randomly chooses whether to
\emph{explore or exploit} on this example. With probability
$1-\epsilon$, the algorithm chooses to exploit and follows the
recommendation of the current learned policy. With the remaining
probability, the algorithm performs a randomized variant of the \myalg
update.  A detailed description is given in Algorithm~\ref{alg:structcb}.

We assess the algorithm's performance via a measure of regret, where
the comparator is a mixture of the reference policy and the best
one-step deviation.  Let $\pibar_i$ be the averaged policy based on
all policies in $\explset$ at round $i$. $\by_{ie}$ is the predicted
label in either step 9 or step 14 of Algorithm~\ref{alg:structcb}. The
average regret is defined as: \eq{
\begin{align*}
\textrm{Regret} &= \frac{1}{N}\sum_{i=1}^N \Big(\E[\ell(\by^*_i, \by_{ie})] -
\beta \E[\ell(\by^*_i, \by_{i e_{\textrm{ref}}})] \\ 
& - (1-\beta) \sum_{t=1}^T \min_{\pi \in \Pi} \E_{s \sim
  d_{\bar{\pi}_i}^t}[Q^{\bar{\pi}_i}(s,\pi)]\Big)
\end{align*}
}{
\begin{equation*}
\textrm{Regret} = \frac{1}{N}\sum_{i=1}^N \Big(\E[\ell(\by^*_i, \by_{ie})] -
\beta \E[\ell(\by^*_i, \by_{i e_{\textrm{ref}}})]  - (1-\beta) \sum_{t=1}^T \min_{\pi \in \Pi} \E_{s \sim
  d_{\bar{\pi}_i}^t}[Q^{\bar{\pi}_i}(s,\pi)]\Big)
\end{equation*}
}
Recalling our earlier definition of
$\epsfull[i]$~\eqref{eqn:epsilons}, we bound on the regret of
Algorithm~\ref{alg:structcb} with a proof in the appendix.
\begin{theorem}
\label{thm:regretscb}
  Algorithm~\ref{alg:structcb} with parameter $\epsilon$ satisfies:
  \begin{align*}
    \textrm{Regret} \le \epsilon + \frac{1}{N} \sum_{i=1}^N
    \epsfull[\numexpl{i}],
  \end{align*}
  \label{thm:structcb}
\end{theorem}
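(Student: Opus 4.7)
The plan is to decompose the per-round expected loss by the exploration coin and reduce the exploitation contribution to Theorem~\ref{thm:regret}, by arguing that the exploration updates feed the online cost-sensitive learner an unbiased version of the full-information stream used by Algorithm~\ref{alg:learning}.

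I first condition on the exploration indicator. In exploration mode (probability $\epsilon$) the predicted end state has loss bounded by $1$, contributing at most $\epsilon$ to $\frac{1}{N}\sum_i \E[\ell(\by^*_i,\by_{ie})]$. In exploitation mode the algorithm draws a policy uniformly from $\explset$ and follows it to an end state, so by definition of $\pibar_i$ (the average of the $\numexpl{i}$ policies currently in $\explset$) the conditional expected loss equals $J(\pibar_i)$. Combining, $\E[\ell(\by^*_i,\by_{ie})]\le \epsilon + J(\pibar_i)$.

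Next I verify that the cost-sensitive example fed to the online learner at Line~\ref{line:cbupdate} is unbiased. Conditioned on reaching $s_t$ on an exploration round, the uniform choice of $a_t$ over the $K$ actions in $A(s_t)$, combined with the multiplier $K$ in $\hat c(a) = K\,\ell(e(a_t))\mathbf{1}[a=a_t]$, gives
\[
\E\bigl[\hat c(a) \bigm| s_t\bigr] = \ell(e(a)),
\]
where the right-hand side also averages over the internal randomness of $\piout$. Thus, over the first $\numexpl{i}$ exploration rounds, the pairs $(\Phi(\bx,s_t),\hat c)$ match in distribution the stream Algorithm~\ref{alg:learning} would produce, and the no-regret property of the oracle gives a bound of the same form as $\epsfull[\numexpl{i}]$ from Equation~\eqref{eqn:epsilons}.

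Finally, applying Theorem~\ref{thm:regret} to this subsequence yields
\[
\beta\bigl(J(\pibar_i)-J(\piref)\bigr) + (1-\beta)\sum_{t=1}^{T}\Bigl(J(\pibar_i) - \min_{\pi\in\Pi}\E_{s\sim \state{t}{\pibar_i}}[Q^{\pibar_i}(s,\pi)]\Bigr) \le T\,\epsfull[\numexpl{i}].
\]
Dropping the non-negative quantity $(1-\beta)(T-1)J(\pibar_i)$ from the left, the remainder bounds $J(\pibar_i) - \beta J(\piref) - (1-\beta)\sum_t \min_\pi \E[Q^{\pibar_i}(s,\pi)]$ by a multiple of $\epsfull[\numexpl{i}]$. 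Averaging over $i$ and adding the exploration contribution of $\epsilon$ produces the claim. The main obstacle is the careful bookkeeping of nested randomness: the exploration coin, the random depth $t$, the random action $a_t$, and the internal randomness of $\piout$ must be peeled off in order so that in expectation the importance-weighted stream matches the full-information one. A secondary subtlety is that $\hat c$ can be as large as $K$, inflating the effective scale over which the cost-sensitive oracle must guarantee no-regret; this only changes how $\epsfull[\numexpl{i}]$ decays with $\numexpl{i}$ and not the form of the final bound.
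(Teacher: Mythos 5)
Your proposal is correct and follows essentially the same route as the paper's proof sketch: split rounds by the exploration coin (contributing $\epsilon$), show the importance-weighted cost $\hat{c}$ is unbiased for $Q^{\piout}(s,a)$ so that the exploration updates simulate Algorithm~\ref{alg:learning} in expectation, and invoke Theorem~\ref{thm:regret} on the exploitation rounds. Your explicit step of dropping the non-negative term $(1-\beta)(T-1)J(\pibar_i)$ to connect the left-hand side of Theorem~\ref{thm:regret} to the regret definition is a detail the paper glosses over, and the ``multiple of $\epsfull[\numexpl{i}]$'' (namely $T\epsfull[\numexpl{i}]$) you end up with is exactly what the paper's own Corollary~\ref{corscb} uses downstream, so the residual factor-of-$T$ looseness you flag is shared with the original.
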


With a no-regret learning algorithm, we expect
\begin{equation}
  \epsfull[i] \leq \epsclass + cK\sqrt{\frac{\log|\Pi|}{i}},
  \label{eqn:noregretcb}
\end{equation}
where $|\Pi|$ is the cardinality of the policy class.  This
leads to the following corollary with a proof in the appendix.

\begin{corollary}
  \label{corscb}
  In the setup of Theorem~\ref{thm:structcb}, suppose further that the
  underlying no-regret learner satisfies~\eqref{eqn:noregretcb}. Then
  with probability at least $1 - 2/(N^5K^2T^2\log(N|\Pi|))^3$, 
  \begin{align*}
  \textrm{Regret} = O\left((KT)^{2/3} \sqrt[3]{\frac{\log (N|\Pi|)}{N}} +
  T\epsclass\right). 
\end{align*}
\end{corollary}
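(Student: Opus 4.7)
The plan is to start from the per-instance regret bound of Theorem~\ref{thm:regretscb}, $\textrm{Regret} \le \epsilon + \tfrac{1}{N}\sum_{i=1}^N \epsfull[\numexpl{i}]$, substitute the no-regret rate from~\eqref{eqn:noregretcb}, control the randomness of the exploration counts $\numexpl{i}$ via a Chernoff bound, and finally optimize the exploration probability $\epsilon$.

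First, I would apply~\eqref{eqn:noregretcb} pointwise to write $\epsfull[\numexpl{i}] \le \epsclass + cK\sqrt{\log|\Pi|/\numexpl{i}}$ whenever $\numexpl{i} \ge 1$. The main technical difficulty is that $\numexpl{i}$ is itself random: it is a sum of $i-1$ independent Bernoulli$(\epsilon)$ indicators of whether exploration was chosen in past rounds. I would use a multiplicative Chernoff bound to conclude $\P[\numexpl{i} < i\epsilon/2] \le \exp(-i\epsilon/8)$ and take a union bound over all $i$ exceeding a threshold $i_0 = \Theta(\log(NKT\log(N|\Pi|))/\epsilon)$. Choosing the constant in $i_0$ large enough makes the total failure probability match the stated $2/(N^5K^2T^2\log(N|\Pi|))^3$.

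On the resulting good event, I would split the sum as
\begin{equation*}
\tfrac{1}{N}\sum_{i=1}^N \epsfull[\numexpl{i}] \le \epsclass + \tfrac{cK}{N}\sum_{i=i_0}^N \sqrt{\tfrac{2\log|\Pi|}{i\epsilon}} + O\!\left(\tfrac{i_0}{N}\right),
\end{equation*}
where the first $i_0$ terms are absorbed via the trivial $O(1)$ bound on $\epsfull$. Since $\sum_{i=1}^N i^{-1/2} = O(\sqrt{N})$, the middle contribution is $O\!\left(K\sqrt{\log(N|\Pi|)/(N\epsilon)}\right)$ after absorbing the extra $\log$ factor, and the $i_0/N$ term is lower order at the eventual choice of $\epsilon$. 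Propagating the horizon factor $T$ arising from the definition of $\textrm{Regret}$ (mirroring the role of $T$ in Theorem~\ref{thm:regret}) yields
\begin{equation*}
\textrm{Regret} \le \epsilon + T\epsclass + O\!\left(KT\sqrt{\tfrac{\log(N|\Pi|)}{N\epsilon}}\right).
\end{equation*}

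Finally, I would balance the $\epsilon$ term against the concentration term: setting $\epsilon = \Theta\!\left((KT)^{2/3}(\log(N|\Pi|)/N)^{1/3}\right)$ makes both of order $(KT)^{2/3}\sqrt[3]{\log(N|\Pi|)/N}$, which gives the stated rate. The main obstacle I expect is the careful bookkeeping in the Chernoff step: matching the failure probability $\exp(-i_0\epsilon/8)$, multiplied by the union over $i$, to the precise quantity $2/(N^5K^2T^2\log(N|\Pi|))^3$ appearing in the corollary, and verifying that the slack from the initial $i < i_0$ rounds remains lower order after the optimal $\epsilon$ is plugged in. Once these are in place, the concentration and optimization steps are standard.
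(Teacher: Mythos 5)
Your proposal is correct and follows essentially the same route as the paper's own proof: substitute the no-regret rate into Theorem~\ref{thm:structcb}, apply a multiplicative Chernoff bound $\P[\numexpl{i} < i\epsilon/2] \le \exp(-i\epsilon/8)$ with a threshold $i_0 = \Theta(\log N/\epsilon)$ whose early rounds are bounded trivially, sum the $i^{-1/2}$ terms, and balance $\epsilon$ against the resulting $KT\sqrt{\log(N|\Pi|)/(N\epsilon)}$ term to get $\epsilon = (KT)^{2/3}(\log(N|\Pi|)/N)^{1/3}$. Your evaluation of the middle sum as $O\bigl(KT\sqrt{\log(N|\Pi|)/(N\epsilon)}\bigr)$ is in fact the one consistent with the stated final rate, so no gap.
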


\section{Experiments}
\label{sec:exp}

This section shows that \myalg is able to improve upon a suboptimal reference 
policy and provides empirical evidence to support the analysis in Section \ref{sec:analysis}.
We conducted experiments on the following three applications.

\noindent {\bf Cost-Sensitive Multiclass classification.} 
For each cost-sensitive multiclass sample, each choice of label has an 
associated cost. The search space for this task is a binary search tree.
The root of the tree corresponds to the whole set of labels. We recursively split the set of labels in half, until 
each subset contains only one label. A trajectory through the search space is a path from root-to-leaf in this tree.
The loss of the end state is defined by the cost. 
An optimal reference policy can lead the agent to the end state with the 
minimal cost.  
We also show results of using a bad reference policy which arbitrarily chooses an action at each state. 
The experiments are conducted on {\sf KDDCup 99} dataset\footnote{\url{
http://kdd.ics.uci.edu/databases/kddcup99/kddcup99.html}} generated 
from a computer network intrusion detection task.  The dataset contains $5$ classes, $4,898,431$ training and $311,029$ test instances.

\begin{table}[t]
\begin{center}
\bgroup
\def\arraystretch{1}
\begin{tabular}{|l|c|c|c|}
\hline
\cellcolor{Gray}roll-out $\rightarrow$ & \cellcolor{Gray} & \cellcolor{Gray} & \cellcolor{Gray} \\
\cellcolor{yellow!10}$\downarrow$ roll-in & \multirow{-2}{1.6cm}{\centering\bf Reference}\cellcolor{Gray} & \multirow{-2}{1.6cm}{\centering\bf Mixture}\cellcolor{Gray} & \multirow{-2}{1.6cm}{\centering\bf Learned}\cellcolor{Gray} \\
\hline
\multicolumn{4}{|c|}{\bf Reference is optimal} \\
\hline
\cellcolor{yellow!10} {\bf Reference}& 0.282 &  0.282 &   0.279   \\
\cellcolor{yellow!12} {\bf Learned}& 0.267 & \cellcolor{green!10} \best{0.266} &  \best{0.266}   \\
\hline
\multicolumn{4}{|c|}{\bf Reference is bad} \\
\hline
\cellcolor{yellow!10} {\bf Reference}& 1.670 &  1.664 &  0.316   \\
\cellcolor{yellow!10} {\bf Learned} & \best{0.266} & \cellcolor{green!10} \best{0.266} &  \best{0.266}  \\
\hline
\end{tabular}
\egroup
\end{center}
\caption{ The average cost on cost-sensitive classification dataset;
  columns are roll-out and rows are roll-in.  The best result is
  bold. \textbf{\searn achieves 0.281 and 0.282 when the reference
    policy is optimal and bad, respectively.} \myalg is
  Learned/Mixture and highlighted in green.  }
\label{tab:mc}
\end{table}

\begin{table}[t]
\begin{center}
\bgroup
\def\arraystretch{1}
\begin{tabular}{|l|c|c|c|}
\hline
\cellcolor{Gray}roll-out $\rightarrow$ & \cellcolor{Gray} & \cellcolor{Gray} & \cellcolor{Gray} \\
\cellcolor{yellow!10}$\downarrow$ roll-in & \multirow{-2}{1.6cm}{\centering\bf Reference}\cellcolor{Gray} & \multirow{-2}{1.6cm}{\centering\bf Mixture}\cellcolor{Gray} & \multirow{-2}{1.6cm}{\centering\bf Learned}\cellcolor{Gray} \\
\hline
\multicolumn{4}{|c|}{\bf Reference is optimal} \\
\hline                     
\cellcolor{yellow!10} {\bf Reference}&  95.58   &  94.12  & 94.10 \\
\cellcolor{yellow!10} {\bf Learned} &  \best{95.61}   &  \cellcolor{green!10} 94.13  & 94.10 \\
\hline
\end{tabular}
\egroup
\end{center}
\caption{The accuracy on POS tagging;
columns are roll-out and rows are roll-in.  
The best result is bold. 
 \textbf{\searn achieves 94.88.}
\myalg is Learned/Mixture and highlighted in green.
}
\label{tab:pos}
\end{table}

\begin{table}[t]
\begin{center}
\bgroup
\def\arraystretch{1}
\begin{tabular}{|l|c|c|c|}
\hline
\cellcolor{Gray}roll-out $\rightarrow$ & \cellcolor{Gray} & \cellcolor{Gray} & \cellcolor{Gray} \\
\cellcolor{yellow!10}$\downarrow$ roll-in & \multirow{-2}{1.6cm}{\centering\bf Reference}\cellcolor{Gray} & \multirow{-2}{1.6cm}{\centering\bf Mixture}\cellcolor{Gray} & \multirow{-2}{1.6cm}{\centering\bf Learned}\cellcolor{Gray} \\
\hline
\multicolumn{4}{|c|}{\bf Reference is optimal} \\
\hline
\cellcolor{yellow!10} {\bf Reference}& 87.2 &  89.7 &  88.2   \\
\cellcolor{yellow!10} {\bf Learned}& \best{90.7}  &\cellcolor{green!10} 90.5  &  86.9  \\
\hline
\multicolumn{4}{|c|}{\bf Reference is suboptimal} \\
\hline
\cellcolor{yellow!10} {\bf Reference}& 83.3 & 87.2 &  81.6   \\
\cellcolor{yellow!10} {\bf Learned}& 87.1 & \cellcolor{green!10}\best{90.2}   &  86.8  \\
\hline
\multicolumn{4}{|c|}{\bf Reference is bad} \\
\hline
\cellcolor{yellow!10} {\bf Reference}&68.7&  65.4 &  66.7   \\
\cellcolor{yellow!10} {\bf Learned}& 75.8  & \cellcolor{green!10}\best{89.4}  & 87.5 \\
\hline
\end{tabular}
\egroup
\end{center}
\caption{The UAS score on dependency parsing data set; columns are
  roll-out and rows are roll-in.  The best result is
  bold. \textbf{\searn achieves 84.0, 81.1, and 63.4 when the
    reference policy is optimal, suboptimal, and bad, respectively.}
  \myalg is Learned/Mixture and highlighted in green.  }
\label{tab:dep}
\end{table}

\noindent {\bf Part of speech tagging.}  The search space for POS
tagging is left-to-right prediction. Under Hamming loss the trivial
optimal reference policy simply chooses the correct part of speech for
each word. We train on $38k$ sentences and test on $11k$ from the Penn
Treebank~\cite{marcus93treebank}. One can construct suboptimal or even
bad reference policies, but under Hamming loss these are all
equivalent to the optimal policy because roll-outs by any fixed policy
will incur exactly the same loss and the learner can immediately learn
from one-step deviations.

\noindent {\bf Dependency parsing.} A dependency parser learns to
generate a tree structure describing the syntactic dependencies
between words in a
sentence~\cite{mcdonald05dependency,nivre03parsing}. We implemented a
hybrid transition system~\cite{kuhlmann2011dynamic} which parses a
sentence from left to right with three actions: \textsc{Shift},
\textsc{ReduceLeft} and \textsc{ReduceRight}. We used the
``non-deterministic oracle''~\cite{goldberg13oracles} as the optimal
reference policy, which leads the agent to the best end state
reachable from each state.  We also designed two suboptimal reference
policies. A bad reference policy chooses an arbitrary legal action at
each state.  A suboptimal policy applies a greedy selection and
chooses the action which leads to a good tree when it is obvious;
otherwise, it arbitrarily chooses a legal action. (This suboptimal
reference was the \emph{default} reference policy used prior to the
work on ``non-deterministic oracles.'')  We used data from the Penn
Treebank Wall Street Journal corpus: the standard data split for
training (sections 02-21) and test (section 23). The loss is evaluated
in UAS (unlabeled attachment score), which measures the fraction of
words that pick the correct parent.

For each task and each reference policy, we compare 6 different
combinations of roll-in (learned or reference) and roll-out (learned,
mixture or reference) strategies. We also include \searn in the
comparison, since it has notable differences from \myalg.  \searn
rolls in and out with a mixture where a different policy is drawn for
each state, while \myalg draws a policy once per example. \searn uses
a batch learner, while \myalg uses online.  The policy in \searn is a
mixture over the policies produced at each iteration.  For \myalg, it
suffices to keep just the most recent one.  It is an open research
question whether an analogous theoretical guarantee of Theorem
\ref{thm:regret} can be established for \searn.

Our implementation is based on Vowpal
Wabbit\footnote{\url{http://hunch.net/~vw/}}, a machine learning
system that supports online learning and \lts.  For \myalg's mixture
policy, we set $\beta=0.5$. We found that \myalg is not sensitive to $\beta$,
and setting $\beta$ to be 0.5 works well in practice.
 For \searn, we set the mixture parameter
to be $1-(1-\alpha)^t$, where $t$ is the number of rounds and
$\alpha=10^{-5}$. Unless stated otherwise all the learners take 5
passes over the data.

Tables \ref{tab:mc}, \ref{tab:pos} and \ref{tab:dep} show the results
on cost-sensitive multiclass classification, POS tagging and
dependency parsing, respectively.  The empirical results qualitatively
agree with the theory.  Rolling in with reference is always bad.  When
the reference policy is \textbf{optimal}, then doing roll-outs with
reference is a good idea.  However, when the reference policy is
\textbf{suboptimal} or \textbf{bad}, then rolling out with reference
is a bad idea, and mixture rollouts perform substantially
better. \myalg also significantly outperforms \searn on all tasks.

\section{Proofs of Main Results}

\begin{lemma}[Ross \& Bagnell Lemma 4.3] For any two policies, $\pi_{1},\pi_{2}$: 

  $J(\pi_{1})-J(\pi_{2}) =
  T\E_{t\sim U(1,T),s\sim\state{t}{\pi_{1}}}\left[Q^{\pi_{2}}(s,\pi_{1})-Q^{\pi_{2}}(s,\pi_{2})\right]
  =
  T\E_{t\sim U(1,T),s\sim\state{t}{\pi_{2}}}\left[Q^{\pi_{1}}(s,\pi_{1})-Q^{\pi_{1}}(s,\pi_{2})\right]$
  \label{lemma:regret-onestep}
\end{lemma}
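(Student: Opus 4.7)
The plan is to prove this performance difference identity by constructing a telescoping sum over a sequence of hybrid policies that interpolate between $\pi_1$ and $\pi_2$, step by step.

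First, I would define, for each $t \in \{0,1,\ldots,T\}$, a hybrid policy $\sigma_t$ which executes $\pi_1$ at decision times $1, 2, \ldots, t$ and then switches to $\pi_2$ at times $t+1, \ldots, T$. By construction $\sigma_0 = \pi_2$ and $\sigma_T = \pi_1$, so I can write the telescoping decomposition
\begin{equation*}
J(\pi_1) - J(\pi_2) \;=\; J(\sigma_T) - J(\sigma_0) \;=\; \sum_{t=0}^{T-1} \bigl( J(\sigma_{t+1}) - J(\sigma_t) \bigr).
\end{equation*}

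Next I would analyze a single term $J(\sigma_{t+1}) - J(\sigma_t)$. The two policies $\sigma_{t+1}$ and $\sigma_t$ behave identically at times $1,\ldots,t$ and identically at times $t+2,\ldots,T$; they differ only in the action taken at time $t+1$, where $\sigma_{t+1}$ plays $\pi_1$ and $\sigma_t$ plays $\pi_2$. Because the first $t$ steps follow $\pi_1$ in both cases, the state $s$ at decision time $t+1$ is distributed as $\state{t+1}{\pi_1}$. Conditional on that state, the continuation cost in both $\sigma_{t+1}$ and $\sigma_t$ is obtained by taking one action and then rolling out with $\pi_2$, which is exactly the definition of $Q^{\pi_2}(s,\cdot)$. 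Therefore
\begin{equation*}
J(\sigma_{t+1}) - J(\sigma_t) \;=\; \E_{s \sim \state{t+1}{\pi_1}} \bigl[ Q^{\pi_2}(s,\pi_1) - Q^{\pi_2}(s,\pi_2) \bigr].
\end{equation*}

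Summing over $t$ from $0$ to $T-1$ (equivalently over time indices $1$ to $T$) and rewriting as a uniform expectation over $t \sim U(1,T)$ multiplied by $T$ yields the first claimed identity. The second identity, with roles of $\pi_1$ and $\pi_2$ swapped and rollouts performed under $\pi_1$, follows immediately by applying the same argument with the hybrid sequence built in the reverse direction (switching from $\pi_2$ to $\pi_1$ step by step).

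The main subtlety is purely bookkeeping: being consistent about whether $\state{t}{\pi}$ denotes the distribution over states at which decision $t$ is made or the distribution after $t$ actions, and ensuring the hybrid $\sigma_t$ is defined so that the state at the switchover time is drawn from the roll-in policy's distribution. There is no technical obstacle beyond this careful indexing, since $Q^{\pi_2}$ already bundles together ``take one action and then follow $\pi_2$,'' which is precisely what is needed to collapse each telescoping term into the stated form.
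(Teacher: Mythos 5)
Your proof is correct and is essentially identical to the paper's: both construct the hybrid policies that follow $\pi_1$ for the first $t$ steps and $\pi_2$ thereafter, telescope $J(\pi_1)-J(\pi_2)$ over consecutive hybrids, identify each increment as $\E_{s\sim\state{t}{\pi_1}}[Q^{\pi_2}(s,\pi_1)-Q^{\pi_2}(s,\pi_2)]$, and obtain the second identity by swapping the roles of $\pi_1$ and $\pi_2$. No substantive differences.
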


\begin{proof}
Let $\pi^{t}$ be a policy that executes $\pi_{1}$ in the first $t$
steps and then executes $\pi_{2}$ from time steps $t+1$ to $T$. We
have $J(\pi_{1})=J(\pi^{T})$ and
$J(\pi_{2})=J(\pi^{0})$. Consequently, we can set up the telescoping
sum: 
\begin{align*}
J(\pi_{1})-J(\pi_{2}) &=
\sum_{t=1}^{T}\left[J(\pi^{t})-J(\pi^{t-1})\right] =
\sum_{t=1}^{T}\left[\E_{s\sim\state{t}{\pi_{1}}}
  \left[Q^{\pi_{2}}(s_{t,}\pi_{1})-Q^{\pi_{2}}(s_{t,}\pi_{2})\right]\right]\\
&=
T\E_{t\sim U(1,T), s\sim\pi_{1}}\left[Q^{\pi_{2}}(s,\pi_{1})-Q^{\pi_{2}}(s,\pi_{2})\right]
\end{align*}

The second equality in the lemma can be obtained by reversing the
roles of $\pi_1$ and $\pi_2$ above. 
\end{proof}

\subsection{Proof of Theorem~\ref{thm:regret}}

We start with an application of
Lemma~\ref{lemma:regret-onestep}. Using the lemma, we have:
\begin{align}
  \nonumber J(\pibar)-J(\piref) &=
  \frac{1}{N}\sum_{i} \left[ J(\pil{i}) - J(\piref)\right] \\ 
  &= \frac{1}{N}\sum_{i} \left[ T\E_{t\sim U(1,T), s\sim\pil{i}} \left[
      \Qref(s, \pil{i}) - \Qref(s, \piref)\right]\right]
  \label{eqn:refreg}
\end{align}
We also observe that
\begin{align}
  &\sum_{t=1}^T \left(J(\pibar) - \min_{\pi \in \Pi} \E_{s \sim
    \state{t}{\pibar}} [Q^{\pibar}(s,\pi)]\right) \nonumber\\  
  &= \frac{1}{N} \sum_{i=1}^N \sum_{t=1}^T \left[ J(\pil{i}) -
    \min_{\pi \in \Pi} \E_{s \sim \state{t}{\pil{i}}}
        [Q^{\pil{i}}(s,\pi)]\right] \nonumber\\
  &= \frac{1}{N} \sum_{i=1}^N \sum_{t=1}^T \left[ \E_{s \sim
      \state{t}{\pil{i}}} [Q^{\pil{i}}(s,\pil{i})] - \min_{\pi
      \in \Pi} \E_{s \sim \state{t}{\pil{i}}}
    [Q^{\pil{i}}(s,\pi)]\right] \nonumber\\
  &\leq \frac{1}{N} \sum_{i=1}^N \sum_{t=1}^T \left[ \E_{s \sim
      \state{t}{\pil{i}}} [Q^{\pil{i}}(s,\pil{i}) - \min_a
      Q^{\pil{i}}(s,a)]\right]. 
  \label{eqn:localreg}
\end{align}
Combining the above bounds from Equations~\ref{eqn:refreg}
and~\ref{eqn:localreg}, we see that
\begin{align*}
  & \beta \left(J(\pibar) - J(\piref)\right) + (1 - \beta) 
  \sum_{t=1}^T\left( J(\pibar) - \min_{\pi \in \Pi} \E_{s \sim
    \state{t}{\pibar}} [Q^{\pibar}(s,\pi)]\right)\\
  \leq & \frac{1}{N} \sum_{i=1}^N \sum_{t=1}^T \E_{s \sim
      \state{t}{\pil{i}}} \left[ \beta\left( \Qref(s, \pil{i}) -
    \Qref(s, \piref) \right) + (1 - \beta) \left(
    Q^{\pil{i}}(s,\pil{i}) - \min_a Q^{\pil{i}}(s,a) \right)
    \right]\\
  = & \frac{1}{N} \sum_{i=1}^N \sum_{t=1}^T \E_{s \sim
      \state{t}{\pil{i}}} \left[ Q^{\piout}(s, \pil{i}) -
    \beta \Qref(s, \piref) - (1 - \beta)\min_a Q^{\pil{i}}(s,a)
    \right]\\
  \leq & \frac{1}{N} \sum_{i=1}^N \sum_{t=1}^T \E_{s \sim
      \state{t}{\pil{i}}} \left[ Q^{\piout}(s, \pil{i}) -
    \beta \min_a \Qref(s, a) - (1 - \beta)\min_a Q^{\pil{i}}(s,a) 
    \right]
\end{align*}

\subsection{Proof of Corollary~\ref{cor:noregret}}

The proof is fairly straightforward from definitions. By definition of
no-regret, it is immediate that the gap
\begin{align}
  \sum_{i=1}^N \sum_{t=1}^T \E \left[c_{i,t}(\pil{i}(s_t)) -
    c_{i,t}(\pi(s_t))\right] = o(NT),
  \label{eqn:noregret}
\end{align}
for all policies $\pi \in \Pi$, where we recall that $c_{i,t}$ is the
cost-vector over the actions on round $i$ when we do roll-outs from
the $t_{th}$ decision point. Let $\E_i$ denote the conditional
expectation on round $i$, conditioned on the previous rounds in
Algorithm~\ref{alg:learning}. Then it is easily seen that
\begin{align*}
  \E_i[c_{i,t}(a)] &= \E_i\left[\ell(e_{i,t}(a)) - \min_{a^{'}}
    \ell(e_{i,t}(a^{'})) \right],
\end{align*}
with $e_{i,t}$ being the end-state reached on completing the roll-out
with the policy $\piout$ on round $i$, when action $a$ was taken on
the decision point $t$. Recalling that we rolled in following the
trajectory of $\piin$, this expectation further simplifies to 
\begin{align*}
  \E_i[c_{i,t}(a)] &= \E_{s \sim \state{t}{\pil{i}}}
  \left[Q^{\piout}(s, a)\right] - \E_i\left[\min_{a^{'}}
    \ell(e_{i,t}(a^{'})) \right]. 
\end{align*}
Now taking expectations in Equation~\ref{eqn:noregret} and combining
with the above observation, we obtain that for any policy $\pi \in
\Pi$, 
\begin{align*}
  &\sum_{i=1}^N \sum_{t=1}^T \E\left[c_{i,t}(\pil{i}(s_t)) -
    c_{i,t}(\pi(s_t))\right]\\ 
  &= \sum_{i=1}^N \sum_{t=1}^T \E_{s \sim
    \state{t}{\pil{i}}}\left[Q^{\piout}(s, \pil{i}(s)) -
    Q^{\piout}(s, \pi(s))\right] = o(NT).
\end{align*}
Taking the best policy $\pi \in \Pi$ and dividing through by $NT$
completes the proof.

\subsection{Proof sketch of Theorem~\ref{thm:regretscb}}
(Sketch only)
  We decompose the analysis over exploration and exploitation
  rounds. For the exploration rounds, we bound the regret by its
  maximum possible value of 1. To control the regret on the
  exploitation rounds, we focus on the updates performed during
  exploration. 

  The cost vector $\hat{c}(a)$ used at an exploration round $i$
  satisfies
  \begin{align*}
    \E_i[\hat{c}(a)] &= \E_i\left[K \ell(e(a_t)) \mathbf{1}[a = a_t]
      \right]\\
    &= \E_{t \sim U(0:T-1), s \sim \state{t}{\pil{\numexpl{i}}}}
    \left[ Q^{\piout}(s,a) \right],
  \end{align*}
  Corollary~\ref{cor:noregret}. Since the cost vector is identical in
  expectation as that used in Algorithm~\ref{alg:learning}, the proof
  of theorem~\ref{thm:regret}, which only depends on expectations, can
  be reused to prove a result similar to theorem~\ref{thm:regret} for
  the exploration rounds.  That is, letting $\pibar_i$ to be the
  averaged policy over all the policies in $\explset$ at exploration
  round $i$, we have the bound
    \begin{small}
    \begin{align*}
      \beta (J(\pibar_i) - J(\piref)) + (1 - \beta) & \sum_{t=1}^T\big(
      J(\pibar) - \min_{\pi \in \Pi} \E_{s \sim \state{t}{\pibar}}
      [Q^{\pibar}(s,\pi)]\big) \\ &\leq T\epsfull[i],
    \end{align*}
  \end{small}
where $\epsfull[i]$ is as defined in Equation~\ref{eqn:epsilons}.
  
  On the exploitation rounds, we can now invoke this
  guarantee. Recalling that we have $\numexpl{i}$ exploration rounds
  until round $i$, the expected regret at an exploitation round $i$ is
  at most $\delta_{\numexpl{i}}$. Thus the overall regret of the
  algorithm is at most 
  \begin{align*}
    \textrm{Regret} &\leq \epsilon + \frac{1}{N}\sum_{i=1}^N
    \epsfull[\numexpl{i}],
  \end{align*}
  which completes the proof.

\subsection{Proof of corollary~\ref{corscb}}
  We start by substituting Equation~\ref{eqn:noregretcb} in the regret
  bound of Theorem~\ref{thm:structcb}. This yields
  \begin{align*}
    \textrm{Regret} &\leq \epsilon + \frac{T}{N}\epsclass +
    \frac{cKT}{N}\sum_{i=1}^N \sqrt{\frac{\log|\Pi|}{\numexpl{i}}}.
  \end{align*}
  We would like to further replace $n_i$ with its expectation which is
  $\epsilon i$. However, this does not yield a valid upper bound
  directly. Instead, we apply a Chernoff bound to the quantity $n_i$,
  which is a sum of $i$ i.i.d. Bernoulli random variables with mean
  $\epsilon$. Consequently, we have 
  \begin{align*}
    \P\left(n_i \leq (1 - \gamma)\epsilon i \right) \leq
    \exp\left(-\frac{\gamma^2\epsilon i}{2} \right) \leq
    \exp(-\epsilon i/8),
  \end{align*}
  for $\gamma = 1/2$. Let $i_0 = 16\log N/\epsilon+1$. Then we can sum
  the failure probabilities above for all $i \geq i_0$ and obtain 
  \begin{align*}
    \sum_{i = i_0}^N \P\left(n_i \leq \epsilon i/2 \right) &\leq
    \sum_{i = i_0}^N \exp(-\epsilon i/8) \leq \sum_{i=i_0}^\infty
    \exp(-\epsilon i/8)\\
    &\leq \frac{\exp(-\epsilon i_0/8)}{1 - \exp(-\epsilon/8)}\\
    &= \frac{\exp(-2\log N)}{\exp(\epsilon/8) - 1} \leq
    \frac{8}{N^2\epsilon}, 
  \end{align*}
  where the last inequality uses $1 + x \leq \exp(x)$. Consequently,
  we can now allow a regret of 1 on the first $i_0$ rounds, and
  control the regret on the remaining rounds using $n_i \leq \epsilon
  i/2$. Doing so, we see that with probability at least $1 -
  2/(N^2\epsilon)$ 
  \begin{align*}
    \textrm{Regret} &\leq \epsilon + \frac{i_0}{N} +
    \frac{T}{N}\epsclass + \frac{cKT}{N}\sum_{i=1}^N
    \sqrt{\frac{2\log|\Pi|}{\epsilon i}}\\
    &\leq \epsilon + \frac{16 \log N + \epsilon}{N\epsilon} +
    \frac{T}{N}\epsclass + \frac{8cKT\log |\Pi|}{\epsilon N} 
   \end{align*}
  Choosing $\epsilon = (KT)^{2/3}(\log(N|\Pi|)/N)^{1/3}$ completes the
  proof. 

  \subsection{Proof of Theorem~\ref{thm:snakes}}

  \begin{figure*}[t]
    \centering
    \begin{tabular}{cccc}
      \begin{minipage}{0.2\textwidth}
        \scalebox{0.33}{\input{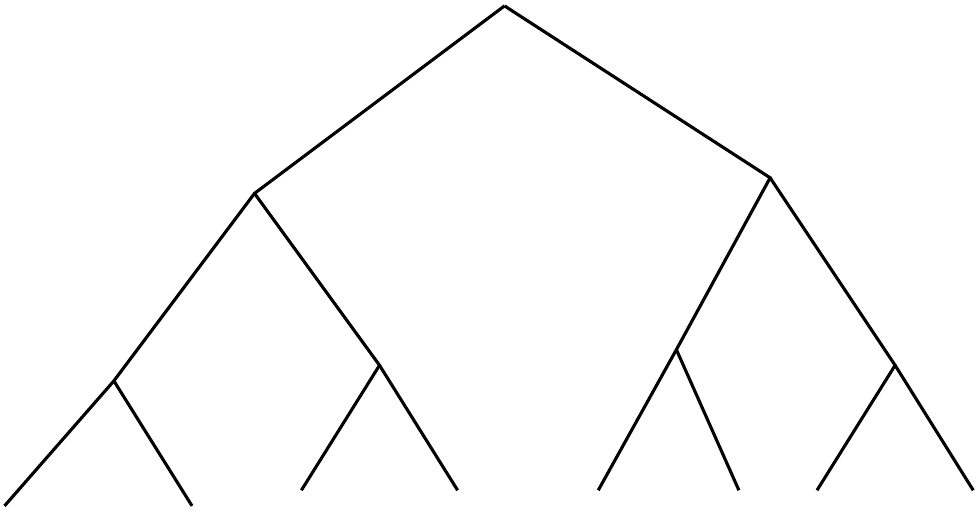_t}}
      \end{minipage} &
      \begin{minipage}{0.2\textwidth}
        \scalebox{0.24}{\input{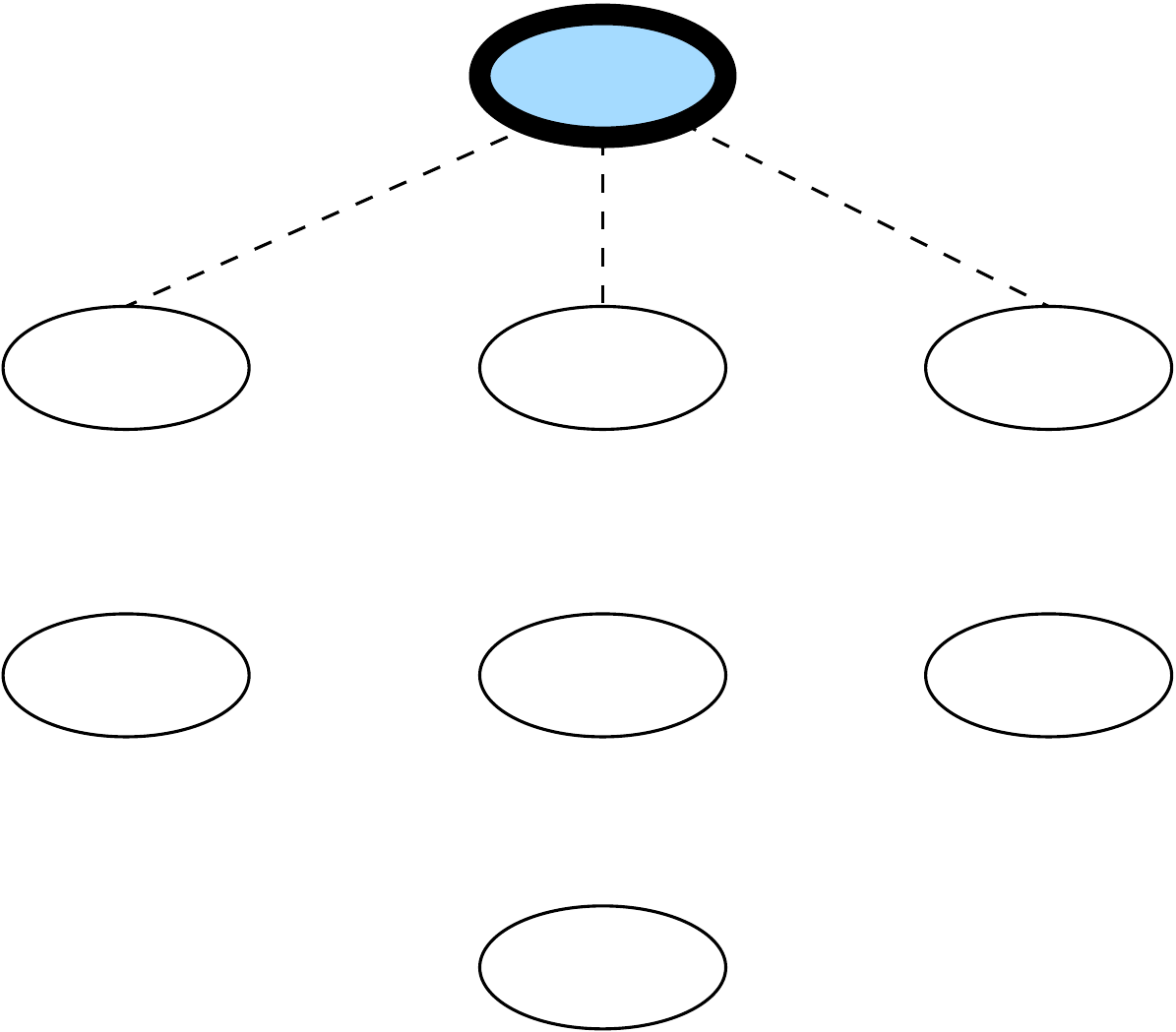_t}}
      \end{minipage} &
      \begin{minipage}{0.2\textwidth}
        \scalebox{0.23}{\input{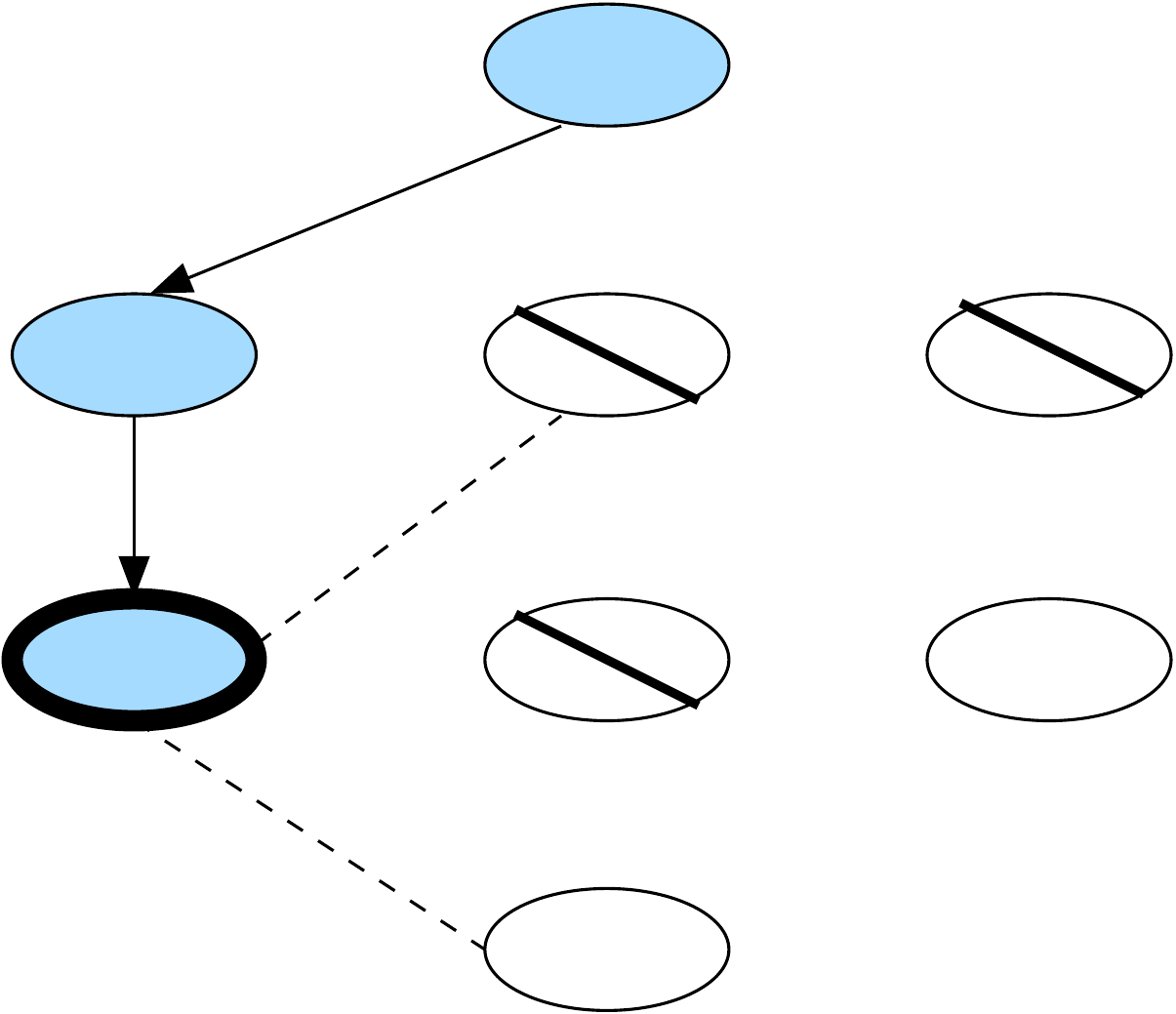_t}}
      \end{minipage} &
      \begin{minipage}{0.2\textwidth}
        \scalebox{0.23}{\input{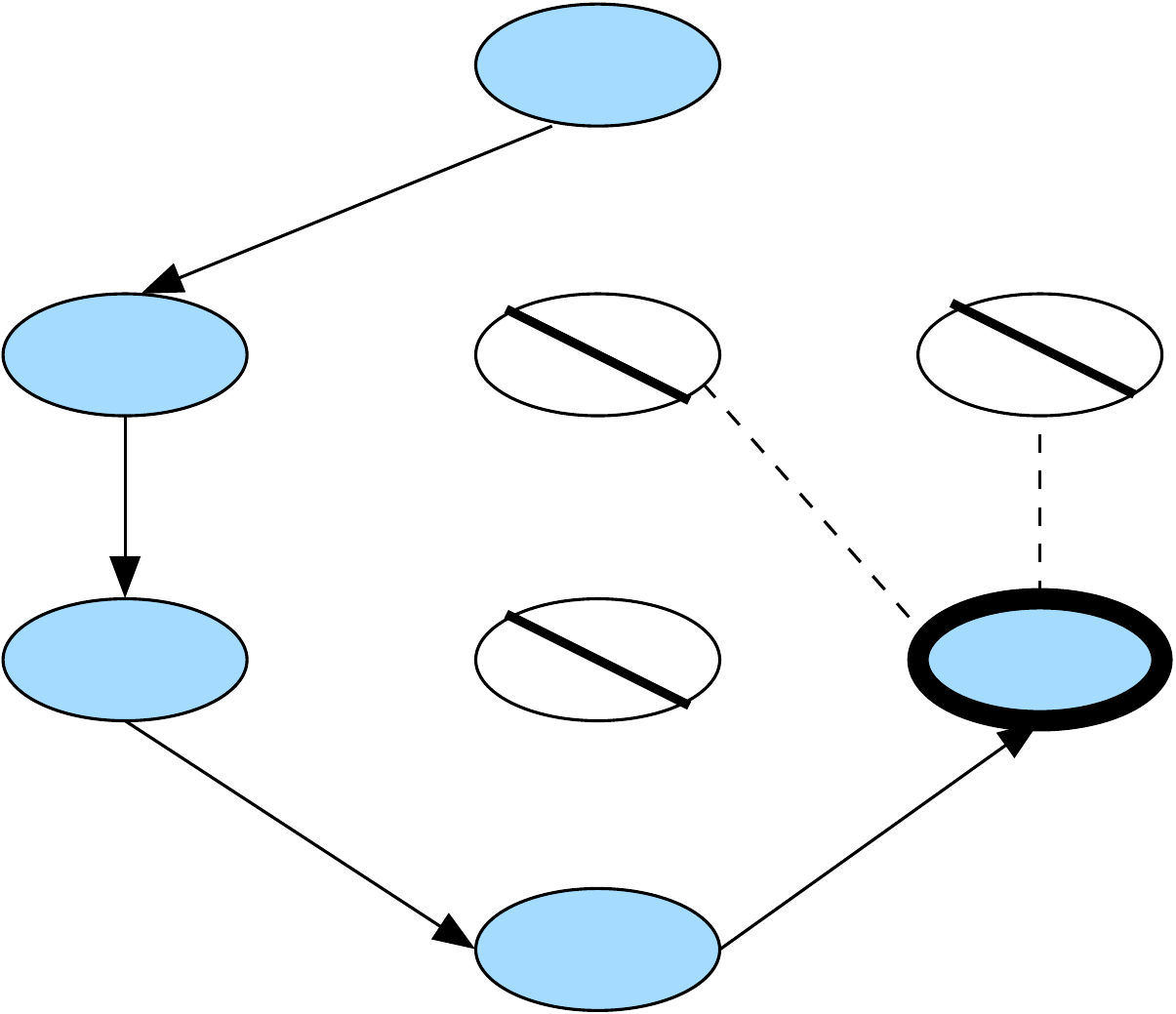_t}}
      \end{minipage}\\
      (a) & (b) & (c) & (d) 
    \end{tabular}
    \caption{Pictorial illustration of the proof elements of
      Theorem~\ref{thm:snakes}. Panel (a) depicts the actions chosen by
      policy $000$. Selected action in each state is indicated in
      bold. Panels (b) through (d) depict various stages as the
      algorithm updates the policy to its one-step deviations, starting
      from the policy $000$. Each policy that the
      algorithm selects is depicted by a shaded circle, with the arrows
      marking the moves of the algorithm. Current policy is the shaded
      circle with a bold boundary. Dashed lines denote the potential
      one-step deviations that the algorithm can move to and crossed
      policies are those which have higher costs than the current policy
      (see text for details).}
    \label{fig:snakes}
  \end{figure*}
    
  The proof follows from results in combinatorics.  The dynamics of
  algorithms considered here can be thought of as a path through a
  graph where the vertices are the corners of the boolean hypercube in
  $T$ dimensions with two vertices at Hamming distance 1 sharing an
  edge.  We demonstrate that there is a cost function such that the
  algorithm is forced to traverse a long path before reaching a local
  optimum.  Without loss of generality, assume that the algorithm
  always moves to a one-step deviation with the lowest cost since
  otherwise longer paths exist.

  To gain some intuition, first consider $T = 3$ which is depicted in
  Figure~\ref{fig:snakes}. Suppose the algorithm starts from the
  policy $000$ then moves to the policy $001$. If the algorithm picks
  the best amongst the one-step deviations, we know that $J(001) \leq
  \min\{J(000), J(010), J(100)\}$, placing constraints on the costs of
  these policies which force the algorithm to not visit any of these
  policies later. Similarly, if the algorithm moves to the policy
  $011$ next, we obtain a further constraint $J(011) < \min\{J(101),
  J(001)\}$. It is easy to check that the only feasible move
  (corresponding to policies not crossed in
  Figure~\ref{fig:snakes}(c)) which decreases the cost under these
  constraints is to the policy $111$ and then $110$, at which point
  the algorithm attains local optimality since no more moves that
  decrease the cost are possible. In general, at any step $i$ of the
  path, the policy $\pil{i}$ is a one-step deviation of $\pil{i-1}$ and
  at least 2 or more steps away from $\pil{j}$ for $j <
  i-1$. The policy never moves to a neighbor of an ancestor (excluding
  the immediate parent) in the path.

  This property is the key element to understand more
  generally. Suppose we have a current path $\pil{1} \rightarrow
  \pil{2} \ldots \rightarrow \pil{i-1} \rightarrow \pil{i}$. Since we
  picked the best neighbor of $\pil{j}$ as $\pil{j+1}$, $\pil{i+1}$
  cannot be a neighbor of any $\pil{j}$ for $j < i$. Consequently, the
  maximum number of updates the algorithm must make is given by the
  length of the longest such path on a hypercube, where each vertex
  (other than start and end) neighbors exactly two other vertices on
  the path.  This is called the \emph{snake-in-the-box} problem in
  combinatorics, and arises in the study of error correcting codes. It
  is shown by~\citet{Abbott1988} that the length of longest such path
  is $\Theta(2^T)$.  With monotonically decreasing costs for policies
  in the path and maximal cost for all policies not in the path, the
  traversal time is $\Theta(2^T)$.

  Finally, it might appear that Algorithm~\ref{alg:learning} is
  capable of moving to policies which are not just one-step deviations
  of the currently learned policy, since it performs updates on
  ``mini-batches'' of $T$ cost-sensitive examples. However, on this
  lower bound instance, Algorithm~\ref{alg:learning} will be forced to
  follow one-step deviations only due to the structure of the cost
  function. For instance, from the policy $000$ when we assign maximal
  cost to policies $010$ and $100$ in our example, this corresponds to
  making the cost of taking action $1$ on first and second step very
  large in the induced cost-sensitive problem. Consequently, $001$ is
  the policy which minimizes the cost-sensitive loss even when all the
  $T$ roll-outs are accumulated, implying the algorithm is
  forced to traverse the same long path to local optimality.

\section*{Acknowledgements}
Part of this work was carried out while Kai-Wei, Akshay and Hal were
visiting Microsoft Research.

\bibliography{bibfile}
\bibliographystyle{icml2015}

\newpage

\appendix
\onecolumn
 
  \section{Details of cost-sensitive reduction}

  \begin{algorithm}[t]
  \caption{Cost-sensitive One Against All (CSOAA) Algorithm}
  \begin{algorithmic}[1]
    \REQUIRE Initial predictor $f_1(x)$
    \FORALL{$t = 1,2, \ldots T$}
    \STATE Observe $\{x_{t,i}\}_{i=1}^K$.
    \STATE Predict class $i_t = \arg\min_{i=1}^Kf_t(x_{t,i})$.
    \STATE Observe costs $\{c_{t,i}\}_{i=1}^K$.
    \STATE Update $f_t$ using online least-squares regression on data
    $\{x_{t,i}, c_{t,i}\}_{i=1}^K$. 
    \ENDFOR
  \end{algorithmic}
  \label{alg:csoaa}
  \end{algorithm}

  In this section we present the details of the reduction to
  cost-sensitive multiclass classification used in our experimental
  evaluation. The experiments used the Cost-Sensitive One Against All
  (CSOAA) classification technique, the pseudocode for which is
  presented in Algorithm~\ref{alg:csoaa}. In words, the algorithm
  takes as input a feature vector $x_{t,i}$ for class $i$ at round
  $t$. It then trains a regressor to predict the corresponding costs
  $c_{t,i}$ given the features. Given a fresh example, the predicted
  label is the one with the smallest predicted cost. This is a natural
  extension of the One Against All (OAA) approach for multiclass
  classification to cost-sensitive settings. Note that this also
  covers the alternative approach of having a common feature vector,
  $x_{t,i} \equiv z_t$ for all $i$ and instead training $K$ different
  cost predictors, one for each class. If $z_t \in \R^d$, one can
  simply create $x_{t,i} \in \R^{dK}$, with $x_{t,i} = z_t$ in the
  $i_{th}$ block and zero elsewhere. Learning a common predictor $f$
  on $x$ is now representationally equivalent to learning $K$ separate
  predictors, one for each class.

  There is one missing detail in the specification of
  Algorithm~\ref{alg:csoaa}, which is the update step. The specifics
  of this step depend on the form of the function $f(x)$ being
  used. For instance, if $f(x) = w^Tx$, then a simple update rule is
  to use online ridge regression (see e.g. Section 11.7
  in~\cite{CesaBianchi2006}). Online gradient
  descent~\cite{zinkevich03online} on the squared loss $\sum_{i=1}^K
  (f(x_{t,i}) - c_{t,i})^2$ is another simple alternative, which can
  be used more generally. The specific implementation in our
  experiments uses a more sophisticated variant of online gradient
  descent with linear functions.
  
  \section{Details of Experiments} 
  Our implementation is based on Vowpal Wabbit (VW) version 7.8 (\url{http://hunch.net/~vw/}). It is 
	available at \url{https://github.com/KaiWeiChang/vowpal_wabbit/tree/icmlexp}. 
	For \myalg, we use flags ``--search\_rollin'', ``--search\_rollout'', 	
	``--search\_beta'' to set the rollin policy, the rollout policy, and $\beta$, 
	respectively. We use ``--search\_interpolation policy 
	--search\_passes\_per\_policy --passes 5'' to enable \searn.  
	The details settings of various VW flags for the three experiments are shown below:
	\begin{itemize}
		\item POS tagging: we use ``--search\_task sequence --search 45 
			--holdout\_off --affix -2w,+2w --search\_neighbor\_features -1:w,1:w -b 28''
		\item Dependency parsing: we use `` --search\_task dep\_parser --search 
			12  --holdout\_off --search\_history\_length 3 
			--search\_no\_caching -b 24 --root\_label 8 --num\_label 12''
		\item Cost-sensitive multiclass: we use ``--search\_task multiclasstask 
			--search 5  --holdout\_off --mc\_cost''
	\end{itemize}
The data sets used in the experiments are available upon request.  

\begin{verbatim}

\end{verbatim}

\end{document}